\documentclass{article} 
\usepackage{iclr2024_conference,times}


\usepackage{amsmath,amsfonts,bm}









\def\eqref#1{equation~\ref{#1}}









\def\1{\bm{1}}






\def\rmC{{\mathbf{C}}}

\def\rmX{{\mathbf{X}}}





\DeclareMathAlphabet{\mathsfit}{\encodingdefault}{\sfdefault}{m}{sl}
\SetMathAlphabet{\mathsfit}{bold}{\encodingdefault}{\sfdefault}{bx}{n}













\DeclareMathOperator*{\argmin}{arg\,min}

\usepackage{hyperref}
\usepackage{url}

%

\usepackage{upgreek}
\usepackage{pbox}
\usepackage[inline]{enumitem}
\usepackage{algorithm,algorithmic}
\usepackage{xspace}
\usepackage{amsthm,amsmath}
\usepackage{graphicx}
\usepackage{aliascnt}
\usepackage{cleveref}
\usepackage{autonum}
\usepackage{wrapfig}
\usepackage{subfig}
\usepackage{xspace}
\usepackage{stmaryrd}
\usepackage[inline]{enumitem}
\usepackage{url}

\usepackage{tikz}
\usetikzlibrary{calc}
\usepackage{pgfplots}
\usepackage{bbm}
\usepackage{ifthen}
\usepackage{xargs}
\usepackage[textwidth=1.8cm]{todonotes}

\crefname{theorem}{theorem}{Theorems}
\Crefname{Theorem}{Theorem}{Theorems}

\newtheorem*{lemma_nonumber*}{Lemma}

\newaliascnt{lemma}{theorem}
\newtheorem{lemma}[lemma]{Lemma}
\aliascntresetthe{lemma}
\crefname{lemma}{lemma}{lemmas}
\Crefname{Lemma}{Lemma}{Lemmas}

\newaliascnt{corollary}{theorem}

\aliascntresetthe{corollary}
\crefname{corollary}{corollary}{corollaries}
\Crefname{Corollary}{Corollary}{Corollaries}

\newaliascnt{proposition}{theorem}
\newtheorem{proposition}[proposition]{Proposition}
\aliascntresetthe{proposition}
\crefname{proposition}{proposition}{propositions}
\Crefname{Proposition}{Proposition}{Propositions}

\newaliascnt{definition}{theorem}
\newtheorem{definition}[definition]{Definition}
\aliascntresetthe{definition}
\crefname{definition}{definition}{definitions}
\Crefname{Definition}{Definition}{Definitions}

\newaliascnt{remark}{theorem}

\aliascntresetthe{remark}
\crefname{remark}{remark}{remarks}
\Crefname{Remark}{Remark}{Remarks}

\crefname{example}{example}{examples}
\Crefname{Example}{Example}{Examples}

\crefname{technique}{technique}{techniques}
\Crefname{Technique}{Technique}{Techniques}

\crefname{figure}{figure}{figures}
\Crefname{Figure}{Figure}{Figures}

\newtheorem{assumption}{\textbf{A}\hspace{-3pt}}
\crefformat{assumption}{{\textbf{A}}#2#1#3}

\newtheorem{assumptionF}{\textbf{F}\hspace{-3pt}}
\crefformat{assumptionF}{{\textbf{F}}#2#1#3}

\Crefname{assumptionB}{\textbf{B}\hspace{-3pt}}{\textbf{B}\hspace{-3pt}}
\crefname{assumptionB}{\textbf{B}}{\textbf{B}}

\Crefname{assumptionC}{\textbf{C}\hspace{-3pt}}{\textbf{C}\hspace{-3pt}}
\crefname{assumptionC}{\textbf{C}}{\textbf{C}}

\Crefname{assumptionH}{\textbf{H}\hspace{-3pt}}{\textbf{H}\hspace{-3pt}}
\crefname{assumptionH}{\textbf{H}}{\textbf{H}}

\Crefname{assumptionT}{\textbf{T}\hspace{-3pt}}{\textbf{T}\hspace{-3pt}}
\crefname{assumptionT}{\textbf{T}}{\textbf{T}}

\Crefname{assumptionT}{\textbf{T}\hspace{-3pt}}{\textbf{T}\hspace{-3pt}}
\crefname{assumptionT}{\textbf{T}}{\textbf{T}}

\Crefname{assumptionL}{\textbf{L}\hspace{-3pt}}{\textbf{L}\hspace{-3pt}}
\crefname{assumptionL}{\textbf{L}}{\textbf{L}}

\Crefname{assumptionQ}{\textbf{Q}\hspace{-3pt}}{\textbf{Q}\hspace{-3pt}}
\crefname{assumptionQ}{\textbf{Q}}{\textbf{Q}}


\Crefname{assumptionAR}{\textbf{AR}\hspace{-3pt}}{\textbf{AR}\hspace{-3pt}}
\crefname{assumptionAR}{\textbf{AR}}{\textbf{AR}}

\definecolor{MyRef}{HTML}{74787c}     

\definecolor{lightblue}{HTML}{9A8F97}
\definecolor{darkblue}{HTML}{1A254B}
\definecolor{blue}{HTML}{2B50AA}
\definecolor{CustomRed}{HTML}{cb1338}

\definecolor{DarkBlue}{HTML}{1A254B}
\definecolor{DarkRed}{HTML}{A4243B}
\colorlet{MyBlue}{blue}
\colorlet{MyRed}{CustomRed}

\def\Mbb{\mathbb{M}}
\def\calM{\mathcal{M}}

\newcommand{\tref}[1]{\textup{\Cref{#1}}}

\newcommand{\tta}{\mathtt{A}}

\newcommand{\Capprox}{\tta}

\newcommandx\ctun[1][1=T]{\Capprox_{#1,1}}

\newcommandx{\expec}[2]{{\mathbb E}\left[#1 \middle \vert #2  \right]} 

\newcommandx{\norm}[2][1=]{\ifthenelse{\equal{#1}{}}{\left\Vert #2 \right\Vert}{\left\Vert #2 \right\Vert^{#1}}}
\newcommandx{\normLigne}[2][1=]{\ifthenelse{\equal{#1}{}}{\Vert #2 \Vert}{\Vert #2\Vert^{#1}}}



\def\bfc{\mathbf{c}}
\def\bfY{\mathbf{Y}}

\def\bfX{\mathbf{X}}

\def\bfZ{\mathbf{Z}}

\def\bfZ{\mathbf{Z}}

\def\bfB{\mathbf{B}}


\def\msx{\mathsf{X}}




\def\Qbb{\mathbb{Q}}
\def\Mbb{\mathbb{M}}
\def\Pbb{\mathbb{P}}

\def\rset{\mathbb{R}}


\def\rmd{\mathrm{d}}

\def\rmc{\mathrm{C}}
\def\rmC{\mathrm{C}}

\newcommandx{\functionspace}[2][1=+]{\mathbb{F}_{#1}(#2)}

\newcommandx{\VarDeux}[3][3=]{\operatorname{Var}^{#3}_{#1}\left\{#2 \right\}}

\newcommand{\LeftEqNo}{\let\veqno\@@leqno}






\newcommand{\N}{\ensuremath{\mathbb{N}}}

\newcommand{\PE}{\mathbb{E}}


\newcommandx{\Vnorm}[2][1=V]{\| #2 \|_{#1}}
\newcommandx{\VnormEq}[2][1=V]{\left\| #2 \right\|_{#1}}




\newcommandx\probaMarkovTilde[2][2=]
{\ifthenelse{\equal{#2}{}}{{\widetilde{\mathbb{P}}_{#1}}}{\widetilde{\mathbb{P}}_{#1}\left[ #2\right]}}

\newcommand{\expeLigne}[1]{\PE [ #1 ]}









\def\eqsp{\;}

\newcommand{\ooint}[1]{\left(#1\right)}
\newcommand{\ccint}[1]{\left[#1\right]}

\newcommandx{\weight}[2][2=n]{\omega_{#1,#2}^N}

\newcommandx\sequence[3][2=,3=]
{\ifthenelse{\equal{#3}{}}{\ensuremath{\{ #1_{#2}\}}}{\ensuremath{\{ #1_{#2}, \eqsp #2 \in #3 \}}}}

\newcommandx\sequenceD[3][2=,3=]
{\ifthenelse{\equal{#3}{}}{\ensuremath{\{ #1_{#2}\}}}{\ensuremath{( #1)_{ #2 \in #3} }}}

\newcommandx{\sequencen}[2][2=n\in\N]{\ensuremath{\{ #1_n, \eqsp #2 \}}}
\newcommandx\sequenceDouble[4][3=,4=]
{\ifthenelse{\equal{#3}{}}{\ensuremath{\{ (#1_{#3},#2_{#3}) \}}}{\ensuremath{\{  (#1_{#3},#2_{#3}), \eqsp #3 \in #4 \}}}}
\newcommandx{\sequencenDouble}[3][3=n\in\N]{\ensuremath{\{ (#1_{n},#2_{n}), \eqsp #3 \}}}

\newcommand{\opnorm}[1]{{\left\vert\kern-0.25ex\left\vert\kern-0.25ex\left\vert #1
    \right\vert\kern-0.25ex\right\vert\kern-0.25ex\right\vert}}

\def\Id{\operatorname{Id}}

\newcommandx{\CPE}[3][1=]{{\mathbb E}_{#1}\left[#2 \middle \vert #3  \right]} 
\newcommandx{\CPELigne}[3][1=]{{\mathbb E}_{#1}[#2  \vert #3  ]} 
\newcommandx{\CPEsq}[3][1=]{{\mathbb{E}^{1/2}}_{#1}\left[#2 \middle \vert #3  \right]} 
\newcommandx{\CPVar}[3][1=]{\mathrm{Var}^{#3}_{#1}\left\{ #2 \right\}}
\newcommand{\CPP}[3][]
{\ifthenelse{\equal{#1}{}}{{\mathbb P}\left(\left. #2 \, \right| #3 \right)}{{\mathbb P}_{#1}\left(\left. #2 \, \right | #3 \right)}}

\newcommandx{\osc}[2][1=]{\mathrm{osc}_{#1}(#2)}

\def\Id{\operatorname{Id}}





\newcommand{\ensembleLigne}[2]{\{#1\,:\eqsp #2\}}


\newcommand\coupling[2]{\Gamma(\mu,\nu)}

\def\Leb{\lambda}

\newcommandx{\KLLigne}[2]{\operatorname{KL}( #1 | #2 )}
\newcommandx{\KLLignesqrt}[2]{\operatorname{KL}^{1/2}( #1 | #2 )}

\def\gaStep
\def\QKer{Q}

\def\distance{\mathbf{d}}
\newcommandx{\wasserstein}[3][1=\distance,3=]{\mathbf{W}_{#1}^{#3}\left(#2\right)}
\newcommandx{\wassersteinLigne}[3][1=\distance,3=]{\mathbf{W}_{#1}^{#3}(#2)}
\newcommandx{\wassersteinD}[1][1=\distance]{\mathbf{W}_{#1}}
\newcommandx{\wassersteinDLigne}[1][1=\distance]{\mathbf{W}_{#1}}

\def\sigmaD{\sigma^2}

\newcommandx{\phibfs}[1][1=]{\pmb{\varphi}_{\sigmaD_{#1}}}

\newcommandx\sequenceg[3][2=,3=]
{\ifthenelse{\equal{#3}{}}{\ensuremath{( #1_{#2})}}{\ensuremath{( #1_{#2})_{ #2 \geq #3}}}}

\newcommandx{\distV}[1][1=\bfc]{\mathbf{W}_{#1}}
\newcommandx{\distVdeux}[1][1=W_2]{\mathbf{d}_{#1}}

\def\pathmeas{\mathcal{P}(\mathcal{C})}



\title{Augmented bridge matching}

\author{Valentin De Bortoli \\
CNRS, ENS Ulm, France \\
\texttt{valentin.debortoli@gmail.com} 
\And
Guan-Horng Liu \\
Georgia Institute of Technology, USA \\
\AND
Tianrong Chen \\
Georgia Institute of Technology, USA
\And
Evangelos A. Theodorou \\
Georgia Institute of Technology, USA
\AND
Weilie Nie \\
NVIDIA
}
%

\begin{document}

\maketitle

\begin{abstract}
  Flow and bridge matching are a novel class of processes which encompass 
  diffusion models. One of the main aspect of their increased flexibility is
  that these models can interpolate between arbitrary data distributions,
  i.e.~they generalize beyond generative modeling and can be applied to
  learning stochastic (and deterministic) processes of arbitrary transfer tasks between two given distributions.
  In this paper, we highlight that while flow and bridge matching processes 
  preserve the information of the marginal distributions, they do \emph{not} necessarily 
  preserve the coupling information unless additional, stronger optimality conditions are met.
  This can be problematic if one aims at preserving the original empirical pairing. 
  We show that a
  simple modification of the matching process recovers this
  coupling by augmenting the velocity field (or drift) with the
  information of the initial sample point. Doing so, we lose the Markovian property of
  the process but preserve the coupling information between distributions. We illustrate the efficiency of our augmentation in learning mixture of image translation tasks.
\end{abstract}

\section{Introduction}

  Diffusion models \citep{song2019generative,ho2020denoising,sohl2015deep} have
  recently emerged as state-of-the-art models in generative modeling for diverse
  modalities: from text-to-images \citep{saharia2022photorealistic}, 3D
  \citep{poole2022dreamfusion}, video \citep{ho2022video} and protein synthesis
  \citep{watson2022broadly}. They rely on an iterative procedure, where the
  target data distribution is first corrupted using a \emph{forward noising
    process} converging to a Gaussian distribution. Then, the time-reversal of
  this process is learned leveraging techniques from score-matching
  \citep{hyvarinen2005estimation,vincent2011connection}.

  Diffusion models have recently been adapted to tackle general inverse
  problems. In that case, several frameworks have been proposed: from classifier
  and classifier-free guidance
  \citep{nichol2021beatgans,ho2022classifier,chung2022come,chung2022improving},
  with or without Sequential Monte Carlo correction \citep{trippe2022diffusion},
  to the replacement method in the context of inpainting
  \citep{lugmayr2022repaint} or Denoising Diffusion Restoration Models
  \citep{kawar2022denoising}. While these models have been successful they can be
  seen as guided modifications of diffusion models and in particular, at
  inference time, the output is still initialized with Gaussian noise. In the
  context of inverse problems where one observes a corrupted sample and tries to
  recover a clean version, it is desirable to instead start the inference from the
  corrupted sample. However, this requires changing the target distribution
  of the forward process which is often intractable and
  leads to numerical difficulties.

  To circumvent this issue, \citet{lipman2022flow,peluchettinon} have introduced
  a generalization of diffusion models called \emph{flow matching} (we call
  \emph{bridge matching} its stochastic counterpart) which interpolates between
  two distributions. In particular, both endpoints of the process are fixed by
  the user and the specific application. If the terminal distribution is given
  by the output of the forward noising process, then we recover diffusion
  models. However, bridge matching procedures are more flexible and recent works
  have leveraged them to solve inverse problems \citep{somnath2023aligned,guan2023i2isb,delbracio2023inversion,tong2023conditional,albergo2023stochastic,chung2023direct,heitz2023iterative}. In
  that case, we learn the velocity field of an Ordinary Differential Equation
  (ODE), or the drift of a Stochastic Differential Equation (SDE), between
  degraded and clean samples. To train such models, we assume that we have access to a \emph{coupled} dataset. This means that we can draw samples $(x_0,x_1)$ from a joint distribution. In particular, the training process assumes the knowledge of a certain  \emph{coupling} between the two data distributions.

  In this work, we answer the following question: does the flow/bridge matching
  procedure preserve the coupling information? We already know from
  \citet{gyongy1986mimicking} that it preserves the \emph{marginal}
  information. This is because both flow and bridge matching can be seen as
  specific instances of a \emph{Markovian projection} \citep{shi2023diffusion}
  which enjoys desirable properties. On the other hand, it has been shown 
  that the flow matching procedure provides a coupling which has a lower cost than
  the original one for every convex cost \citep{liu2022rectified}. As a first
  contribution, we show that the \emph{Markovian projection} in fact preserves
  the trajectories if and only if the original coupling is the entropic
  regularized optimal transport one, i.e. a static \emph{Schr\"odinger bridge}. Since in practice, available datasets of
  paired clean/corrupted samples are not ensured to satisfy this property it
  turns out that both bridge and flow matching \emph{do not preserve the
    coupling}. Preserving the coupling information is a desirable property especially in the context of inverse problems where the \emph{paired} dataset, and therefore the training coupling, encodes the relationship between the clean and the corrupted samples. 
    
    Next, we show that simply augmenting the drift (or
  velocity field) with the information of the initial position solves this
  problem, see \Cref{fig:augmented_bridge_matching_illus}. We call this
  procedure \emph{augmented bridge matching}. In order to show that these
  processes preserve the coupling information we leverage tools from the Doob
  $h$-transform literature \citep{palmowski2002technique,leonard2022feynman}.
  Using the notion of \emph{augmented bridge matching} we draw links with a
  recently introduced family of diffusion models \citep{zhou2023denoising} which
  also interpolate between arbitrary data distributions and show that augmented bridge
  matching and these models are two sides of the same coin. We illustrate the
  practical benefits of augmented bridge matching in synthetic examples as well
  as in image inverse problems.

  The rest of the paper is structured as follows. In \Cref{sec:background}, we
  recall the basics on bridge matching and the Doob $h$-transform which is a
  central tool to derive our \emph{augmented bridge matching} procedure. In
  \Cref{sec:sde-repr-mixt} we introduce our methodology and show that it
  preserves the coupling. We take a step back in \Cref{sec:bridge_matching_interpolation_diffusion} and
  show that in fact \emph{augmented bridge matching} can be defined in the
  context of diffusion models, drawing links with \citet{zhou2023denoising}. We
  discuss related work in \Cref{sec:related-work}. Finally, we showcase the
  efficiency of augmented bridge matching in \Cref{sec:experiments} in the context of image inverse problems and conclude
  in \Cref{sec:conclusion}.

\begin{figure}[h]
  \centering
  \includegraphics[width=\linewidth]{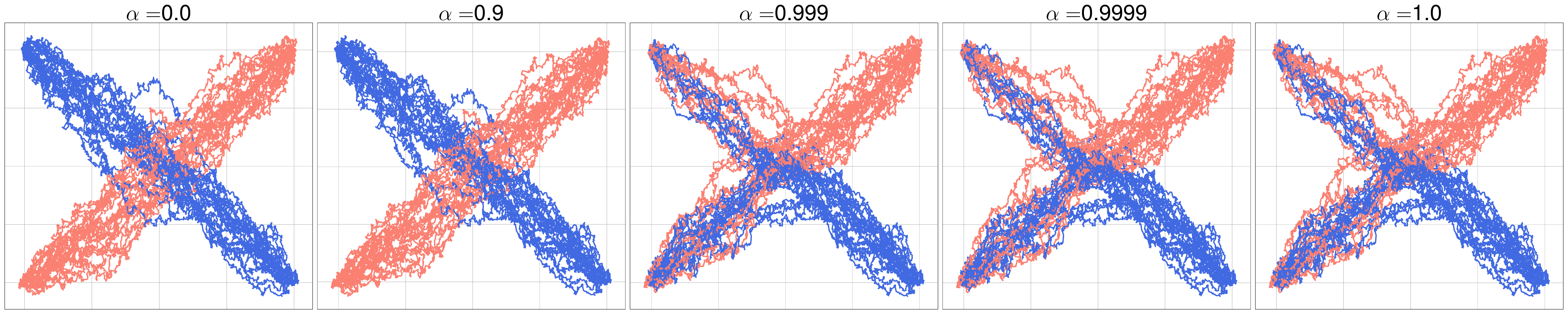}
  \caption{Influence of the augmentation on the bridge matching procedure in a toy example. In the training dataset the initial distribution corresponds to the Gaussian mixture with centers $(-2,-2)$ and $(-2,2)$. The terminal distribution corresponds to the Gaussian mixture with centers $(2,-2)$ and $(2,2)$. In the training dataset, the initial samples from the component with center $(-2,-2)$ are paired with the terminal ones from the component $(2,2)$ while the initial samples from the component with center $(-2,2)$ are paired with the terminal ones from the component $(2,-2)$. Each plot represents the learned coupling with a varying degree of augmentation ($\alpha=0$, augmented bridge matching and $\alpha=1$ original bridge matching, without augmentation). In blue we depict the trajectories arriving at the component $(2,-2)$ and in the red the ones arriving at $(2,2)$. While the training coupling is preserved for $\alpha =0$, its properties are lost as $\alpha$ increases from $0$ to $1$.}
  \label{fig:augmented_bridge_matching_illus}
\end{figure}

\textbf{Notation.} Given a probability space $\msx$, we denote
$\mathcal{P}(\msx)$ the space of probability measures on $\msx$. We denote by
$\pathmeas$, the space of \emph{path measures}, i.e.
$\pathmeas = \mathcal{P}(\rmC(\ccint{0,1}, \rset^d))$.  The subset of
\emph{Markov} path measures associated with an SDE of the form
$\rmd \bfX_t = v_t(\bfX_t) \rmd t + \sigma_t \rmd \bfB_t$, with $\sigma,v$
locally Lipschitz, is denoted $\calM$.  For any $\Qbb \in \calM$, we denote
$\Qbb_t$ its marginal distribution at time $t$, $\Qbb_{s,t}$ the joint
distribution at times $s$ and $t$, $\Qbb_{s|t}$ the conditional distribution at
time $s$ given state at time $t$, and $\Qbb_{|0,1} \in \pathmeas$ its
\emph{diffusion bridge}, i.e.~ the distribution of $\Qbb$ conditioned on
\emph{both} endpoints. Unless specified otherwise, all gradient operators
$\nabla$ are w.r.t. the variable $x_t$ with time index $t$. Let
$(\msx, \mathcal{X})$ and $(\mathsf{Y}, \mathcal{Y})$ be probability
spaces. Given a Markov kernel $\mathrm{K}: \ \msx \times \mathcal{Y} \to [0,1]$
and a probability measure $\mu$ defined on $\mathcal{X}$, we write
$\mu \mathrm{K}$ the probability measure on $\mathcal{Y}$ such that for any
$\mathsf{A} \in \mathcal{Y}$ we have
$\mu \mathrm{K}(\mathsf{A}) = \int_{\msx} \mathrm{K}(x, \mathsf{A}) \rmd
\mu(x)$.  In particular, for any joint distribution
$\Pi_{0,1} \in \mathcal{P}(\rset^d \times \rset^d)$, we denote the \emph{mixture
  of bridges} measure as $\Pi = \Pi_{0,1} \Qbb_{|0,1} \in \pathmeas$, which is
short for
$\Pi(\cdot) = \int_{\rset^d \times \rset^d} \Qbb_{|0,1}(\cdot|x_0, x_1)
\Pi_{0,1}(\rmd x_0, \rmd
x_1)$.  For any measure $\Pi \in \mathcal{P}(\rset^d)$ we
  define the \emph{entropy} of $\Pi$ has
  $\mathrm{H}(\Pi) = -\int_{\rset^d} \log(\rmd \Pi / \rmd \Leb)(x) \rmd \Pi(x)$
  if $\Pi$ admits a density w.r.t. the Lebesgue measure and $+\infty$
  otherwise.

\section{Background}
\label{sec:background}

\paragraph{Flow and Bridge matching.} We start by recalling some basics about bridge
matching, see \citet{lipman2022flow,peluchettinon} for instance.  We consider an initial
coupling $\Pi_{0,1}$, i.e. a probability measure on $\rset^d \times
\rset^d$. For example, in an imaging inverse problem, $\Pi_0$ represents the
distribution of \emph{corrupted} images and $\Pi_1$ the distribution of
\emph{clean} images. We also consider the \emph{path-measure} $\Qbb$ associated
with the Brownian motion $(\sigma \bfB_t)_{t \in \ccint{0,1}}$, with
$\sigma >0$\footnote{We recall basic elements of stochastic calculus in
  \Cref{sec:basics-stoch-calc}.}. Practically, speaking, a sample from $\Qbb$ is
a Brownian motion trajectory. We consider the path measure
$\Pbb = \Pi_{0,1} \Qbb_{|0,1}$. To sample from $\Pbb$, first sample
$(\bfX_0, \bfX_1) \sim \Pi_{0,1}$ and then sample a trajectory according to
$\Qbb_{|0,1}(\cdot|(\bfX_0, \bfX_1))$, i.e. sample a Brownian bridge
interpolating between $\bfX_0$ and $\bfX_1$. For any $t \in \ccint{0,1}$ we have
that $\bfX_t \sim \Pbb_t$ with
\begin{equation}
  \bfX_t = (1-t) \bfX_0 + t \bfX_1 + \sigma (t (1-t))^{1/2} \bfZ , \qquad \bfZ \sim \mathrm{N}(0, \Id) , \qquad (\bfX_0, \bfX_1) \sim \Pi_{0,1} . 
\end{equation}
This can also be expressed in a \emph{forward} fashion with
\begin{equation}
  \rmd \bfX_t = (\bfX_1 - \bfX_t) / (1 - t) \rmd t + \sigma \rmd \bfB_t , \qquad (\bfX_0, \bfX_1) \sim \Pi_{0,1} .
\end{equation}
Note that the measure $\Pbb$ is \emph{a priori} non-Markovian. In
\Cref{prop:comp-with-bridge}, we will show that the measure is Markovian if and
only if $\Pi_{0,1}$ is an entropic regularized Optimal Transport coupling
between $\Pi_0$ and $\Pi_1$, i.e. \emph{a static Schr\"odinger Bridge}. We are now ready to define the \emph{bridge
  matching} measure $\Pbb^\star$, associated with
$(\bfX_t^\star)_{t \in \ccint{0,1}}$ and given by
\begin{equation}
  \rmd \bfX_t^\star = (\mathbb{E}_{\Pbb_{1|t}}[\bfX_1] - \bfX_t^\star) / (1-t) \rmd t + \sigma \rmd \bfB_t , \qquad \bfX_0 \sim \Pi_0 .
\end{equation}
\begin{minipage}{0.4\textwidth}
  The conditional expectation $\mathbb{E}_{\Pbb_{1|t}}[\bfX_1]$ is estimated
  using that $\mathbb{E}_{\Pbb_{1|t}}[\bfX_1]$ is the minimizer of a $\ell_2$
  reconstruction loss. Note that in the case $\sigma = 0$, the stochastic term
  vanishes and we recover the flow matching framework of  \citet{lipman2022flow}.  In practice, we train a 
  neural
  network $\hat{x}_{1}^\theta$ to estimate 
  $\mathbb{E}_{\Pbb_{1|t}}[\bfX_1]$. The discrete Markov chain used at inference
  time is then obtained by discretizing
  $\rmd \bfX_t^\star = (\hat{x}_1^\theta(\bfX_t) - \bfX_t^\star) / (1-t) \rmd t
  + \sigma \rmd \bfB_t$ with $\bfX_0 \sim \Pi_0$. The full training algorithm is
  recalled in \Cref{alg:bridge_matching}.
\end{minipage}
\hfill
\begin{minipage}{0.58\textwidth}
  \vspace{-.75cm}
\begin{algorithm}[H]
\caption{Bridge Matching}
\label{alg:bridge_matching}
\begin{algorithmic}[1]
\STATE{\textbf{Input:} Joint distribution $\Pi_{0,1}$, Brownian bridge $\Qbb_{|0,1}$}
\STATE{Let $\Pbb = \Pi_{0,1} \Qbb_{|0,1}$.}
\WHILE{not converged}
\STATE{Sample $(\bfX_0, \bfX_1) \sim \Pi_{0,1}$}
\STATE{Sample $t \sim \mathrm{Unif}([0,1])$}
\STATE{Sample $\bfZ \sim \mathrm{N}(0, \Id)$}
\STATE{Sample $\bfX_t = (1-t) \bfX_0 + t \bfX_1 + \sigma (t(1-t))^{1/2} \bfZ$}
\STATE{ADAM step on $\| \hat{x}_1^\theta(\bfX_t) - \bfX_1 \|^2$}
  \ENDWHILE
  \STATE \textbf{Output:} $\hat{x}_1^{\theta^\star}$
\end{algorithmic}
\end{algorithm}
\end{minipage}

\paragraph{Schr\"odinger Bridges.} We briefly recall the concept of (static)
Schr\"odinger bridge \citep{schrodinger1932theorie}. Given a path measure
$\Qbb \in \mathcal{P}(\mathcal{C})$ and two distributions
$\Pi_0, \Pi_1 \in \mathcal{P}(\rset^d)$, the static Schr\"odinger bridge is defined as 
\begin{equation}
      \label{eq:sb_static}
    \Pi_{0,1}^\star = \argmin \ensembleLigne{\KLLigne{\mu_{0,1}}{\Qbb_{0,1}}}{\mu \in \mathcal{P}(\rset^d\times\rset^d), \ \mu_0=\Pi_0, \ \mu_1 = \Pi_1} . 
  \end{equation}
  In other words, $\Pi^\star_{0,1}$ is the closest measure to $\Qbb_{0,1}$
  w.r.t. the Kullback-Leibler divergence which satisfies the marginal
  constraints $\Pi_0$ and $\Pi_1$. In the case where $\Qbb$ is associated with
  $(\sigma \bfB_t)_{t \in \ccint{0,1}}$ with $\sigma > 0$, \eqref{eq:sb_static}
  can be rewritten as
\begin{equation}
      \label{eq:entropic_ot}
    \textstyle \Pi_{0,1}^\star = \argmin \ensembleLigne{\int \normLigne{x_0 - x_1}^2 \rmd \mu(x_0,x_1) - (2 \sigma^2) \mathrm{H}(\mu)}{\mu \in \mathcal{P}(\rset^d\times\rset^d), \ \mu_0=\Pi_0, \ \mu_1 = \Pi_1} ,
  \end{equation}
  where $\mathrm{H}$ is the entropy. Hence, in that setting, Schr\"odinger
  bridges coincide with the optimal coupling for the entropy regularized
  Wasserstein$-2$ distance. We refer to \citet{peyre2019computational} for more
  details on the links between Optimal Transport and Schr\"odinger Bridges. In our study we will show that the introduced augmented bridge matching procedure and the original bridge matching scheme coincide if and only if the training paired coupling is given by the static Schr\"odinger Bridge.

\paragraph{Doob $h$-transform.} 
The main theoretical tool of our analysis is the Doob $h$-transform
\citep{palmowski2002technique,leonard2022feynman}. Assume that an initial path
measure $\Pbb$ can be represented as an SDE, then the Doob $h$-transform theory implies
that some modification of $\Pbb$ at the terminal end point can also be expressed
using an SDE. More precisely, let
$\Qbb \in \mathcal{P}(\rmc(\ccint{0,1},\rset^d))$ be a path measure associated
with $\rmd \bfX_t = b_t(\bfX_t) \rmd t + \sigma_t \rmd \bfB_t$ with
$b_t: \ \rset^d \to \rset^d$ and $\sigma_t \geq 0$. Next, given a
\emph{potential function} $h_1: \ \rset^d \to \rset_+$, we define
$\Pbb \in \mathcal{P}(\rmc(\ccint{0,1},\rset^d))$ such that
for any $\omega \in \rmc(\ccint{0,1},\rset^d)$
\begin{equation}
  (\rmd \Pbb / \rmd \Qbb)(\omega) = h_1(\omega_1) .  \label{eq:twist}
\end{equation}
According to \eqref{eq:twist}, $\Pbb$ can be thought as a \emph{twisted} version
of $\Qbb$.  We also define
$h_t(x_t) = \int_{\rset^d} h_1(x_1) \Qbb_{1|t}(\rmd x_1, x_t)$.  Under mild
assumptions on $h_1$, $\Pbb$ is also associated with an SDE, see \cite[Lemma
3.1]{palmowski2002technique}, which we recall in the following proposition.

\begin{proposition}
  \label{prop:doob-h-transform}
  Assume that $h \in \rmc^2(\ccint{0,1} \times \rset^d, \rset_+)$ is bounded
  and that
  $(t,x) \mapsto \langle b_t(x) , \nabla h_t(x) \rangle + (\sigma_t^2/2) \Delta
  h_t(x)$ is measurable and bounded. Assume that
  $\inf \ensembleLigne{h_t(x)}{x \in \rset^d, t \in \ccint{0,1}} > 0$. Then,
  $\Pbb$ is associated with
  $\rmd \bfX_t = \{b_t(\bfX_t) + \sigma_t^2 \nabla \log h_t(\bfX_t) \} \rmd t +
   \sigma_t \rmd \bfB_t $.
\end{proposition}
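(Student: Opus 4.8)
The plan is to identify $h_t(\bfX_t)$ with the density process of $\Pbb$ relative to $\Qbb$ on the canonical filtration $(\mathcal{F}_t)_{t\in\ccint{0,1}}$, and then read off the drift of $\bfX$ under $\Pbb$ from Girsanov's theorem. (We may assume $h_1$ normalised so that $\Pbb$ is a probability measure; the general case only changes $h$ by a multiplicative constant, which leaves $\nabla\log h$ unchanged.) First I would use that $\Qbb$ is Markov together with $h_t(x) = \int_{\rset^d} h_1(x_1)\,\Qbb_{1|t}(\rmd x_1, x)$ to get that the restriction of $\rmd\Pbb/\rmd\Qbb$ to $\mathcal{F}_t$ is
\[
  M_t := \E_\Qbb[h_1(\bfX_1)\mid\mathcal{F}_t] = h_t(\bfX_t),
\]
which, since $h$ is bounded, is a bounded --- hence genuine, continuous --- $\Qbb$-martingale, so $\Pbb$ is the consistent path measure with $\rmd\Pbb/\rmd\Qbb|_{\mathcal{F}_t} = M_t$.

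Next I would apply It\^o's formula to $h_t(\bfX_t)$ along $\rmd\bfX_t = b_t(\bfX_t)\,\rmd t + \sigma_t\,\rmd\bfB_t$, which is licit since $h\in\rmc^2$:
\[
  \rmd h_t(\bfX_t) = \Bigl[\partial_t h_t + \langle b_t, \nabla h_t\rangle + \tfrac{\sigma_t^2}{2}\Delta h_t\Bigr](\bfX_t)\,\rmd t + \sigma_t\,\langle\nabla h_t(\bfX_t),\rmd\bfB_t\rangle .
\]
Since $M_t = h_t(\bfX_t)$ is a continuous martingale, uniqueness of the semimartingale decomposition forces the finite-variation part to vanish, i.e.\ $h$ solves the backward Kolmogorov equation $\partial_t h_t + \langle b_t,\nabla h_t\rangle + (\sigma_t^2/2)\Delta h_t = 0$ along the law of $\bfX$ --- and the hypothesis that $(t,x)\mapsto\langle b_t(x),\nabla h_t(x)\rangle + (\sigma_t^2/2)\Delta h_t(x)$ be measurable and bounded is precisely what makes this argument non-vacuous. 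Hence $\rmd M_t = \sigma_t\langle\nabla h_t(\bfX_t),\rmd\bfB_t\rangle$, and, using $\inf h > 0$, $\rmd M_t = M_t\,\sigma_t\langle\nabla\log h_t(\bfX_t),\rmd\bfB_t\rangle$, i.e.\ $M$ is the stochastic exponential of $\int_0^\cdot \sigma_s\nabla\log h_s(\bfX_s)\cdot\rmd\bfB_s$.

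Then I would invoke the change-of-measure theorem: for any continuous $\Qbb$-local martingale $N$, the process $N_t - \int_0^t M_s^{-1}\,\rmd\langle M,N\rangle_s$ is a continuous $\Pbb$-local martingale. Taking $N = \bfB$ and using $\rmd\langle M,\bfB\rangle_t = \sigma_t\nabla h_t(\bfX_t)\,\rmd t$ shows that $\tilde\bfB_t := \bfB_t - \int_0^t \sigma_s\nabla\log h_s(\bfX_s)\,\rmd s$ is a continuous $\Pbb$-local martingale; the integral is pathwise finite because $\inf h>0$, $h\in\rmc^2$ (so $s\mapsto\nabla h_s(\bfX_s)$ is continuous, hence bounded, along a.e.\ path) and $\sigma$ is bounded on $\ccint{0,1}$. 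As $\tilde\bfB$ differs from $\bfB$ by a finite-variation process, $\langle\tilde\bfB\rangle_t = t\,\Idd$, so by L\'evy's characterisation $\tilde\bfB$ is a $\Pbb$-Brownian motion. Substituting $\sigma_t\,\rmd\bfB_t = \sigma_t\,\rmd\tilde\bfB_t + \sigma_t^2\nabla\log h_t(\bfX_t)\,\rmd t$ into the $\Qbb$-dynamics of $\bfX$ yields, under $\Pbb$, $\rmd\bfX_t = \{b_t(\bfX_t) + \sigma_t^2\nabla\log h_t(\bfX_t)\}\,\rmd t + \sigma_t\,\rmd\tilde\bfB_t$, the claimed representation.

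The main obstacle is making the second and third steps rigorous at the stated regularity: that $h_t(\bfX_t)$ is a true (not merely local) martingale whose It\^o drift vanishes identically --- which is exactly where boundedness of $h$ and of $\langle b,\nabla h\rangle + (\sigma^2/2)\Delta h$ are used --- and the integrability bookkeeping needed for Girsanov (that $M$ is a genuine density process and $\int_0^1\sigma_s|\nabla\log h_s(\bfX_s)|\,\rmd s<\infty$). The algebra and L\'evy's characterisation are routine; an alternative to the latter is to quote the exponential form of Girsanov's theorem directly, as in \citet{palmowski2002technique}.
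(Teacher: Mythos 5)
The paper does not prove \Cref{prop:doob-h-transform} itself; it imports the statement from \cite[Lemma 3.1]{palmowski2002technique}, so there is no in-paper argument to compare against. Your proof is correct and is essentially the standard one underlying that reference: identify $h_t(\bfX_t)$ with the density process $\E_\Qbb[h_1(\bfX_1)\mid\mathcal{F}_t]$, use It\^o plus the martingale property to kill the finite-variation part and exhibit $M$ as a stochastic exponential, then apply Girsanov and L\'evy's characterisation --- with the stated boundedness and $\inf h>0$ hypotheses used exactly where you use them. The only cosmetic mismatch is that the paper phrases ``associated with an SDE'' via martingale problems rather than a canonical driving Brownian motion, but your computation transcribes directly into a verification that $\varphi(\bfX_t)-\int_0^t(\mathcal{A}_s+\sigma_s^2\langle\nabla\log h_s,\nabla\varphi\rangle)(\bfX_s)\,\rmd s$ is a $\Pbb$-local martingale for test functions $\varphi$, so this is not a gap.
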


This result will be key to establish the SDE representation of mixture of
bridges and to derive the \emph{augmented bridge matching} procedure. In
particular, using \Cref{prop:doob-h-transform}, we have that if
$\nabla \log h_t$ is available, then one can sample from $\Pbb$, the twisted version $\Qbb$, by first
sampling $\bfX_0 \sim \Pbb_0$ and then discretizing the dynamics given by 
$\rmd \bfX_t = \{b_t(\bfX_t) + \sigma_t^2 \nabla \log h_t(\bfX_t) \} \rmd t +
\sigma_t \rmd \bfB_t $.

\section{Augmented Bridge Matching}
\label{sec:sde-repr-mixt}

In this section, we first show that the fixed points of the  bridge matching procedure are
necessarily Schr\"odinger bridges. Then, we introduce Augmented Bridge Matching,
a simple modification of Bridge Matching which ensures that couplings are
preserved. As in \citet{guan2023i2isb,somnath2023aligned}, we consider a paired
setting with a coupling $\Pi_{0,1} \in \mathcal{P}(\rset^d \times \rset^d)$. We
aim at deriving a SDE representation of $\Pbb = \Pi_{0,1} \Qbb_{|0,1}$, where
$\Qbb_{|0,1}$ is a given bridge, usually a Brownian bridge. In what follows, we
assume that $\Qbb$ is associated with
$\rmd \bfX_t = b_t(\bfX_t) \rmd t + \sigma_t \rmd \bfB_t$ with
$b: \ \ccint{0,1} \times \rset^d \to \rset^d$ and
$\sigma: \ [0,1] \to (0,+\infty)$. We recover the Brownian case if $b=0$ and
$\sigma_t = \sigma > 0$.

\paragraph{Fixed points of bridge matching.} In the next proposition, we provide necessary and sufficient conditions under
which the coupling is preserved when considering bridge matching.  More
precisely, we consider $(\bfX_t^{\mathcal{M}})_{t \in \ccint{0,1}}$ given by
\begin{equation}
  \label{eq:markovian_projection} 
      \textstyle \rmd \bfX_t^{\mathcal{M}} = \{ b_t(\bfX_t^{\mathcal{M}}) + \sigma_t^2 \mathbb{E}_{\Pbb_{1|t}}[ \nabla \log \Qbb_{1|t}(\bfX_1^{\mathcal{M}}|\bfX_t^{\mathcal{M}}) \ | \ \bfX_t^{\mathcal{M}}] \} \rmd t + \sigma_t \rmd \bfB_t , \quad \bfX_0^{\mathcal{M}} \sim \Pi_0 .
    \end{equation}
    Importantly, note that in the case where $\Qbb$ is associated with
    $(\sigma \bfB_t)_{t \in \ccint{0,1}}$ then
    $(\bfX_t^{\mathcal{M}})_{t \in \ccint{0,1}}$ coincides with
    $(\bfX_t^\star)_{t \in \ccint{0,1}}$ in \Cref{sec:background}. In
    particular, \eqref{eq:markovian_projection} is a generalization of the
    bridge matching procedure considered in
    \citet{peluchettinon,guan2023i2isb,somnath2023aligned,lipman2022flow}, see \citet{shi2023diffusion} for a study of such bridges. We
    denote by $\Pbb^{\mathcal{M}}$ the (Markov) path measure associated with
    \eqref{eq:markovian_projection}. We recall that the (static) Schr\"odinger
    Bridge $\Pi_{0,1}^\star$ is given by \eqref{eq:sb_static}.  The next proposition is our main result.

    \begin{proposition}
      \label{prop:comp-with-bridge}
      Under mild assumptions on $\Pi$ and $\Qbb$,
      $\Pbb^{\mathcal{M}}_{0,1} = \Pi_{0,1}$ if and only if
      $\Pi_{0,1} = \Pi_{0,1}^\star$.
    \end{proposition}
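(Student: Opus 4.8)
The plan is to lift the statement to the level of path measures and relative entropy, and to combine a Pythagorean property of the Markovian projection with the classical fact that Schr\"odinger bridges are exactly the path measures that are at once Markov and in the reciprocal class of $\Qbb$. Write $\Pbb = \Pi_{0,1}\Qbb_{|0,1}$ for the mixture of bridges, $\Pbb^\star = \Pi_{0,1}^\star\Qbb_{|0,1}$ for the associated \emph{dynamic} Schr\"odinger bridge, and let $\Pbb^{\mathcal{M}}$ denote the law of the diffusion \eqref{eq:markovian_projection}. I would first record three preliminary facts. (a) Since $\Pbb$ is a mixture of $\Qbb$-bridges, $\Pbb_{|0,1} = \Qbb_{|0,1}$, so the disintegration of relative entropy over the endpoints gives $\KLLigne{\Pbb}{\Qbb} = \KLLigne{\Pi_{0,1}}{\Qbb_{0,1}}$, and $\Pbb$ lies in the reciprocal class of $\Qbb$. (b) $\Pbb^{\mathcal{M}}$ is the Markovian projection $\projM(\Pbb)$: by the mimicking lemma \citep{gyongy1986mimicking} it shares the one-time marginals of $\Pbb$, and it minimizes $\KLLigne{\Pbb}{\cdot}$ over Markov measures; moreover the Markovian projection satisfies the Pythagorean identity $\KLLigne{\Pbb}{\Qbb} = \KLLigne{\Pbb}{\Pbb^{\mathcal{M}}} + \KLLigne{\Pbb^{\mathcal{M}}}{\Qbb}$ \citep{shi2023diffusion}, hence $\KLLigne{\Pbb^{\mathcal{M}}}{\Qbb} \le \KLLigne{\Pbb}{\Qbb}$ with equality if and only if $\Pbb = \Pbb^{\mathcal{M}}$. (c) A path measure that is simultaneously Markov, in the reciprocal class of $\Qbb$, and has marginals $\Pi_0$ and $\Pi_1$ must equal $\Pbb^\star$; equivalently, $\rmd\Pi_{0,1}^\star/\rmd\Qbb_{0,1}$ factorizes as a product of a function of $x_0$ and a function of $x_1$ --- in particular $\Pbb^\star$ is itself Markov (see, e.g., \citet{leonard2022feynman}).

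Given these, ``$\Pi_{0,1} = \Pi_{0,1}^\star \Rightarrow \Pbb^{\mathcal{M}}_{0,1} = \Pi_{0,1}$'' is immediate: then $\Pbb = \Pbb^\star$ is already Markov by (c), so $\projM(\Pbb) = \Pbb$ and $\Pbb^{\mathcal{M}}_{0,1} = \Pbb_{0,1} = \Pi_{0,1}^\star = \Pi_{0,1}$. To make this concrete without invoking the abstract projection, I would take $h_1$ to be the terminal Schr\"odinger potential of $\Pbb^\star$, check by a direct computation that $\mathbb{E}_{\Pbb^\star_{1|t}}[\nabla\log\Qbb_{1|t}(\bfX_1|\bfX_t)\mid\bfX_t] = \nabla\log h_t(\bfX_t)$ (the score of the mixture $h_t$ is the posterior average of the bridge scores), note that this makes the drift in \eqref{eq:markovian_projection} coincide with the Doob $h$-transform drift of \Cref{prop:doob-h-transform}, and conclude $\Pbb^{\mathcal{M}} = \Pbb^\star$ by uniqueness of the SDE (the two have the same initial law $\Pi_0$).

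For the converse, assume $\Pbb^{\mathcal{M}}_{0,1} = \Pi_{0,1}$. As the Markovian projection preserves one-time marginals, $\Pbb^{\mathcal{M}}$ has endpoints $\Pi_0$ and $\Pi_1$. By the disintegration of relative entropy, $\KLLigne{\Pbb^{\mathcal{M}}}{\Qbb} \ge \KLLigne{\Pbb^{\mathcal{M}}_{0,1}}{\Qbb_{0,1}} = \KLLigne{\Pi_{0,1}}{\Qbb_{0,1}} = \KLLigne{\Pbb}{\Qbb}$, where the last two equalities use the assumption and (a); and (b) gives the reverse inequality $\KLLigne{\Pbb^{\mathcal{M}}}{\Qbb} \le \KLLigne{\Pbb}{\Qbb}$. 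Since $\KLLigne{\Pbb}{\Qbb}<\infty$ under the standing assumptions, these force equality in (b), so $\Pbb = \Pbb^{\mathcal{M}}$. Hence $\Pbb$ is Markov; being also reciprocal with marginals $\Pi_0,\Pi_1$, part (c) gives $\Pbb = \Pbb^\star$, i.e.\ $\Pi_{0,1} = \Pbb_{0,1} = \Pbb^\star_{0,1} = \Pi_{0,1}^\star$.

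I expect the main obstacle to be preliminary (b), and in particular the ``only if'' direction in ``equality iff $\Pbb = \Pbb^{\mathcal{M}}$'', together with pinning down the ``mild assumptions'' --- finiteness of $\KLLigne{\Pi_{0,1}}{\Qbb_{0,1}}$, absolute continuity and positivity of the relevant transition densities, and enough regularity for \eqref{eq:markovian_projection} to define a genuine, unique diffusion (so that the mimicking lemma and \Cref{prop:doob-h-transform} apply). If I preferred not to cite \citet{shi2023diffusion} for (b), I would establish it by Girsanov: $\KLLigne{\Pbb^{\mathcal{M}}}{\Qbb}$ and $\KLLigne{\Pbb}{\Qbb}$ differ only through the time-integrated squared drifts, and passing from $\Pbb$ to $\Pbb^{\mathcal{M}}$ replaces the $\mathcal{F}_t$-conditional drift of $\Pbb$ by its $\bfX_t$-conditional projection; conditional Jensen combined with $\Pbb^{\mathcal{M}}_t = \Pbb_t$ yields the inequality, and the equality case forces the drift of $\Pbb$ to be a function of $\bfX_t$ alone, i.e.\ $\Pbb = \Pbb^{\mathcal{M}}$. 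A secondary issue is citing the Markov-and-reciprocal characterization of Schr\"odinger bridges in a form that applies under these assumptions.
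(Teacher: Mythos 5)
Your argument is essentially the paper's own proof: both directions rest on the Pythagorean identity for the Markovian projection from \citet{shi2023diffusion}, on the disintegration/data-processing of relative entropy over the endpoints (exploiting that the mixture of bridges and the reference measure share the same conditional bridges $\Qbb_{|0,1}$), on the equality case forcing $\Pbb = \Pbb^{\mathcal{M}}$, and on L\'eonard's characterization of the Schr\"odinger bridge as the unique Markov measure in the reciprocal class of $\Qbb$ with the prescribed marginals. The only cosmetic difference is that you instantiate the Pythagorean identity with the Markov reference $\Qbb$ while the paper instantiates it with $\Pbb^\star = \Pi^\star_{0,1}\Qbb_{|0,1}$ (using that the latter is Markov); both are valid instances of the same lemma and yield the identical conclusion.
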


    In other words, \Cref{prop:comp-with-bridge} shows that the bridge matching
    procedure preserves the coupling if and only if the original coupling used for training is the
    static Schr\"odinger Bridge. 
    In \citet{somnath2023aligned,guan2023i2isb},
    it remains debatable whether the training dataset pairs 
    $\{(X_0^i, X_1^i)\}_{i=1}^N$ inherently represent the solution to the static
    Schr\"odinger Bridge \eqref{eq:sb_static}.
    %
    In particular, this assumption is not ensured in real-world applications and can be easily violated even in low-dimensional cases as shown in \Cref{fig:augmented_bridge_matching_illus}.
    %
    Hence, \emph{the bridge matching
      procedure does not preserve the original coupling in general}.

    \paragraph{Comparison between stochastic and deterministic matching.} One key assumption of \Cref{prop:comp-with-bridge} is that $\sigma > 0$. In
    the case $\sigma =0$, we recover the \emph{flow matching} procedure
    \citep{lipman2022flow}, which is the deterministic counterpart of bridge
    matching. The fixed points of flow matching have been investigated in
    \citet{liu2022flow,liu2022rectified}. In particular, it is shown in
    \citet{liu2022flow} that any \emph{straight coupling} is a fixed point of
    flow matching. In particular it can be shown that there exists straight
    couplings which are not optimal coupling for any convex cost \citep[Example
    3.5]{liu2022rectified}. In contrast, \Cref{prop:comp-with-bridge} ensures
    that fixed points of bridge matching are always optimal couplings. This is a
    key difference between the deterministic and the stochastic setting. More precisely, by introducing stochasticity into the original process we can obtain the uniqueness of the fixed point of the bridge matching, which is not the case in the deterministic setting (flow matching).

\paragraph{Augmented Bridge Matching.}  We are now ready to introduce the Augmented Bridge Matching procedure.  We recall that $\Pbb = \Pi_{0,1} \Qbb_{|0,1}$, where $\Pi_{0,1}$ is the original training coupling and $\Qbb$ is associated with
\begin{equation}
\label{eq:Q_sde}
    \rmd \bfX_t = b_t(\bfX_t) \rmd t + \sigma_t \rmd \bfB_t , \qquad \bfX_0 \sim \Qbb_0 . 
\end{equation}
We define
$h_1 : \ \rset^d \times \rset^d \to \rset_+$ given for any $x_0,x_1$ by
$h_1(x_0,x_1) = (\rmd \Pbb_{1|0} / \rmd \Qbb_{1|0})(x_0,x_1)$. Similarly, as in
\Cref{sec:background}, we define for any $t \in \ccint{0,1}$ and
$x_0,x_t \in \rset^d$,
$h_t(x_0, x_t) = \int_{\rset^d} h_1(x_0,x_1) \Qbb_{1|t}(\rmd x_1, x_t)$. We have
the following result, which is a direct consequence of Doob's $h$-transform. The
full proof is postponed to \Cref{sec:proof-sde-repr}.

\begin{proposition}
  \label{prop:sde-repr-mixt}
  Assume that for any $x_0 \in \rset^d$, $h(x_0, \cdot)$ satisfies the
  conditions of \Cref{prop:doob-h-transform}. Then, $\Pbb$ is associated with 
  \begin{equation}
    \label{eq:augmented_bridge_matching}
    \textstyle \rmd \bfX_t = \{ b_t(\bfX_t) + \sigma_t^2 \mathbb{E}_{\Pbb_{1|0,t}}[ \nabla \log \Qbb_{1|t}(\bfX_1|\bfX_t) \ | \ \bfX_0, \bfX_t] \} \rmd t + \sigma_t \rmd \bfB_t , \quad \bfX_0 \sim \Pi_0 .
  \end{equation}
   In
  particular, $(\bfX_0, \bfX_1) \sim \Pi_{0,1}$.
\end{proposition}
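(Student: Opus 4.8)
The plan is to condition on the initial point $\bfX_0$, reduce to a plain Doob $h$-transform via \Cref{prop:doob-h-transform}, and then re-mix over $\bfX_0 \sim \Pi_0$; the augmentation is exactly the bookkeeping that keeps track of $\bfX_0$ along the way. First, fix $x_0 \in \rset^d$ and let $\Qbb^{x_0} := \Qbb_{|0}(\cdot \mid x_0)$ be the law of the SDE \eqref{eq:Q_sde} started deterministically at $\bfX_0 = x_0$; it is a Markov path measure associated with $\rmd \bfX_t = b_t(\bfX_t)\rmd t + \sigma_t \rmd \bfB_t$. By definition $h_1(x_0,\cdot) = (\rmd \Pbb_{1|0}/\rmd \Qbb_{1|0})(x_0,\cdot)$, so $\int h_1(x_0,x_1)\Qbb_{1|0}(\rmd x_1, x_0) = \Pbb_{1|0}(\rset^d, x_0) = 1$, and hence the path measure $\Pbb^{x_0}$ defined by $(\rmd \Pbb^{x_0}/\rmd \Qbb^{x_0})(\omega) = h_1(x_0, \omega_1)$ is a probability measure. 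I claim $\Pbb^{x_0} = \Pbb_{|0}(\cdot \mid x_0)$: reweighting $\Qbb^{x_0}$ by a function of the terminal value only leaves the bridge unchanged, $\Pbb^{x_0}_{|0,1} = \Qbb^{x_0}_{|0,1} = \Qbb_{|0,1}$; it fixes the initial point, $\Pbb^{x_0}_0 = \delta_{x_0}$; and by construction the terminal marginal becomes $\Pbb^{x_0}_1 = h_1(x_0,\cdot)\,\Qbb_{1|0}(\cdot, x_0) = \Pi_{1|0}(\cdot \mid x_0)$. A path measure is determined by its endpoint law and its bridge, and these three facts are exactly those of $\Pbb_{|0}(\cdot\mid x_0) = \Pi_{1|0}(\cdot\mid x_0)\,\Qbb_{|0,1}$.

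Next, since by hypothesis $h(x_0,\cdot)$ satisfies the assumptions of \Cref{prop:doob-h-transform} with $h_t(x_0, x_t) = \int h_1(x_0,x_1)\Qbb_{1|t}(\rmd x_1, x_t)$, that proposition gives that $\Pbb^{x_0}$ is associated with $\rmd \bfX_t = \{ b_t(\bfX_t) + \sigma_t^2 \nabla \log h_t(x_0, \bfX_t) \}\rmd t + \sigma_t \rmd \bfB_t$, $\bfX_0 = x_0$. It remains to rewrite the correction $\nabla \log h_t(x_0,x_t)$ in the conditional-expectation form of \eqref{eq:augmented_bridge_matching}. Writing the (smooth, positive) transition density of $\Qbb$ as $\Qbb_{1|t}(x_1\mid x_t)$ and differentiating under the integral sign, $\nabla_{x_t} h_t(x_0,x_t) = \int h_1(x_0,x_1)\,\Qbb_{1|t}(x_1\mid x_t)\,\nabla_{x_t}\log \Qbb_{1|t}(x_1\mid x_t)\,\rmd x_1$. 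On the other hand, the same computation as in the proof of \Cref{prop:doob-h-transform} — using the Markov property of $\Qbb$, so that the law of $\bfX_1$ given $\bfX_t$ does not depend on $x_0$ — yields $\Pbb^{x_0}_{1|t}(\rmd x_1, x_t) = \frac{h_1(x_0,x_1)}{h_t(x_0,x_t)}\,\Qbb_{1|t}(\rmd x_1, x_t)$. Dividing the first display by $h_t(x_0,x_t)$ shows $\nabla \log h_t(x_0,x_t) = \mathbb{E}_{\Pbb^{x_0}_{1|t}}[\nabla\log \Qbb_{1|t}(\bfX_1\mid x_t)]$, and since $\Pbb^{x_0}_{1|t} = \Pbb_{1|0,t}(\cdot\mid x_0, x_t)$ this is precisely the drift correction appearing in \eqref{eq:augmented_bridge_matching}.

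Finally I re-mix over $x_0$. Consider the process on the augmented state $(\bfX_0, \bfX_t)$ that freezes the first coordinate, $\rmd \bfX_0 = 0$ and $\rmd \bfX_t = \{ b_t(\bfX_t) + \sigma_t^2\, \mathbb{E}_{\Pbb_{1|0,t}}[\nabla\log\Qbb_{1|t}(\bfX_1\mid\bfX_t)\mid \bfX_0, \bfX_t]\}\rmd t + \sigma_t \rmd \bfB_t$, with $\bfX_0 \sim \Pi_0$ and initial state $(\bfX_0,\bfX_0)$. Conditionally on $\bfX_0 = x_0$ this is exactly the SDE of the previous paragraph, so the law of $(\bfX_t)_{t\in\ccint{0,1}}$ given $\bfX_0 = x_0$ is $\Pbb^{x_0} = \Pbb_{|0}(\cdot\mid x_0)$; integrating against $\Pi_0(\rmd x_0)$ gives that the path law of $(\bfX_t)_{t\in\ccint{0,1}}$ equals $\int \Pbb_{|0}(\cdot\mid x_0)\,\Pi_0(\rmd x_0) = \Pbb$, which is the content of \eqref{eq:augmented_bridge_matching}. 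Since the endpoints of a mixture of bridges are distributed according to the mixing measure, $\Pbb_{0,1} = \Pi_{0,1}$, so $(\bfX_0,\bfX_1)\sim\Pi_{0,1}$. The main obstacle is the identification $\Pbb_{|0}(\cdot\mid x_0) = \Pbb^{x_0}$ — i.e., that Doob's transform with a terminal-only potential reproduces the reweighted mixture of bridges — together with the interchange of gradient and integral in the drift rewriting and the measurability/regularity needed to apply \Cref{prop:doob-h-transform} and to well-pose the augmented SDE uniformly in $x_0$; all of these are covered by the ``mild assumptions on $\Pi$ and $\Qbb$'' (existence of conditional densities, positivity and smoothness of the heat kernel, and the boundedness conditions of \Cref{prop:doob-h-transform} holding for each $h(x_0,\cdot)$).
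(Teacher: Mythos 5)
Your proposal is correct and follows essentially the same route as the paper's proof: condition on $x_0$, identify $\Pbb_{|0}(\cdot\mid x_0)$ as the Doob $h$-transform of $\Qbb_{|0}(\cdot\mid x_0)$ with terminal potential $h_1(x_0,\cdot)$, apply \Cref{prop:doob-h-transform}, and rewrite $\nabla\log h_t(x_0,\cdot)$ as the conditional expectation of $\nabla\log\Qbb_{1|t}$ under $\Pbb_{1|0,t}$. Your differentiation-under-the-integral plus the identification $\Pbb^{x_0}_{1|t} = (h_1/h_t)\,\Qbb_{1|t}$ is exactly the paper's chain of Bayes-rule identities in a slightly different packaging, and your explicit verification that the reweighted measure equals $\Pbb_{|0}(\cdot\mid x_0)$ and the final re-mixing over $\Pi_0$ only spell out steps the paper leaves implicit.
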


\begin{proof}
  In what follows, we give an outline of the proof.  We fix $x_0 \in \rset^d$
  and for any $\omega \in \rmc(\ccint{0,1}, \rset^d)$ we have 
  $\rmd \Pbb_{|0}(\omega|x_0) = \rmd \Qbb_{|0}(\omega|x_0) (\rmd \Pbb_{1|0} / \rmd
  \Qbb_{1|0})(\omega_1, x_0)$. Then, using \Cref{prop:doob-h-transform} we obtain that  $\Pbb_{|0}(\cdot, x_0)$ is associated with
  \begin{equation}
    \label{eq:sde_rep}
    \textstyle \rmd \bfX_t = \{ b_t(\bfX_t) + \sigma_t^2 \nabla \log h_t(\bfX_t, x_0)\} \rmd t + \sigma_t \rmd \bfB_t . 
  \end{equation}
  Finally, we remark that for any $t \in \ccint{0,1}$ and $x_0, x_t \in \rset^d$ we have 
  \begin{align}
    &\textstyle \nabla \log h_{t}(x_t, x_0) \textstyle= \int_{\rset^d} \tfrac{\Pbb_{1|0}(x_1|x_0) \Qbb_{1|t}(x_1|x_t)}{\Qbb_{1|0}(x_1|x_0) h_{t}(x_t|x_0)} \nabla \log \Qbb_{1|t}(x_1|x_t) \rmd x_1 \\
                                                  &\qquad \textstyle= \int_{\rset^d} \tfrac{\Pbb_{1|0}(x_1|x_0) \Qbb_{t|0,1}(x_t|x_0,x_1)}{\Qbb_{t|0}(x_t|x_0) h_{t}(x_t|x_0)} \nabla \log \Qbb_{1|t}(x_1|x_t) \rmd x_1 \\
    &\qquad \textstyle= \int_{\rset^d} \tfrac{\Pbb_{t,1|0}(x_t,x_1|x_0)}{\Pbb_{t|0}(x_t|x_0)} \nabla \log \Qbb_{1|t}(x_1|x_t) \rmd x_1 = \textstyle  \int_{\rset^d} \nabla \log \Qbb_{1|t}(x_1|x_t) \Pbb_{1|t,0}(x_1|x_t,x_0) \rmd x_1 ,
  \end{align}
  where the first equality is obtained using the definition of $h_t$, the second using the key relationship that $\Qbb_{1|t} \Qbb_{t|0} = \Qbb_{t|0,1} \Qbb_{1|0}$ and the third using the definition of $\Pbb$ and that $\Pbb_{t|0} = \Qbb_{t|0} h_{t}$. The last equality follows from Bayes' rule.
\end{proof}

This suggests the following algorithm to approximately sample from $\Pi_{0,1}$:
first sample $\bfX_0 \sim \Pi_0$ and then sample $(\bfX_t)_{t \in \ccint{0,1}}$
according to \eqref{eq:sde_rep}. The term
$\int_{\rset^d} \nabla \log \Qbb_{1|t}(x_1|\bfX_t) \rmd \Pbb_{1|0,t}(x_1, (\bfX_0,
\bfX_t))$ is not tractable and is approximated by $v_t^\theta$ minimizing the
following loss function
\begin{equation}
  \textstyle \mathcal{L}(\theta) = \int_0^1 \lambda_t \expeLigne{\normLigne{v_t^\theta(\bfX_0, \bfX_t) -  \nabla \log \Qbb_{1|t}(\bfX_1|\bfX_t)}^2} \rmd \Pbb(\bfX_0, \bfX_t, \bfX_1) ,
\end{equation}
where $(\lambda_t)_{t \in \ccint{0,1}}$ is a weighting function. The full
algorithm is given in \Cref{alg:augmented_bridge_matching}.

\begin{minipage}{0.58\textwidth}
  \vspace{-.5cm}
\begin{algorithm}[H]
\caption{Augmented Bridge Matching (AugBM)}
\label{alg:augmented_bridge_matching}
\begin{algorithmic}[1]
\STATE{\textbf{Input:} Joint distribution $\Pi_{0,1}$, Brownian bridge $\Qbb_{|0,1}$}
\STATE{Let $\Pbb = \Pi_{0,1} \Qbb_{|0,1}$.}
\WHILE{not converged}
\STATE{Sample $(\bfX_0, \bfX_1) \sim \Pi_{0,1}$}
\STATE{Sample $t \sim \mathrm{Unif}([0,1])$}
\STATE{Sample $\bfZ \sim \mathrm{N}(0, \Id)$}
\STATE{Sample $\bfX_t = (1-t) \bfX_0 + t \bfX_1 + \sigma (t(1-t))^{1/2} \bfZ$}
\STATE{\textcolor{MyRed}{ADAM step on $\| \hat{x}_1^\theta(\bfX_0, \bfX_t) - \bfX_1 \|^2$}}
  \ENDWHILE
  \STATE \textbf{Output:} $x_1^{\theta^\star}$
\end{algorithmic}
\end{algorithm}
\end{minipage}
\hfill
\begin{minipage}{0.38\textwidth}
  \paragraph{Comparison with bridge matching.} The algorithm described in
  \Cref{alg:augmented_bridge_matching} closely resembles the ones proposed in
  \citet{guan2023i2isb,somnath2023aligned}, see
  \Cref{alg:bridge_matching}. However, the main difference (highlighted in
  \textcolor{MyRed}{red}) is that the velocity field $v_t$ can depends on $x_0$
  in our case. Doing so, we do not get that $(\bfX_t)_{t \in \ccint{0,1}}$ is
  Markovian as it depends on the initial condition $\bfX_0$. However, in
  \citet{guan2023i2isb,somnath2023aligned}, the coupling $\Pi_{0,1}$ is provably not
  preserved by $(\bfX_t)_{t \in \ccint{0,1}}$.
\end{minipage}

Therefore in Augmented Bridge Matching the drift depends on both $\bfX_0$ and $\bfX_t$, i.e. the dynamics is not Markovian, but the coupling is preserved. In the original bridge matching procedure, the drift only depends on $\bfX_t$ and the coupling is not preserved, as we have seen in \Cref{prop:comp-with-bridge}. It is also possible to interpolate between the fully augmented bridge matching procedure and the original bridge matching by conditioning the velocity field on $\bfX_{\alpha t}$. If $\alpha =0$ we recover the augmented bridge matching while for $\alpha =1$ we recover the original scheme. The effect of $\alpha$ on the preservation of the original coupling is illustrated in \Cref{fig:augmented_bridge_matching_illus}. 
\section{Bridge matching and diffusion model based interpolation}
\label{sec:bridge_matching_interpolation_diffusion}

Finally, in this section, we show that the bridge matching procedure can be
shown to be equivalent to a methodology based on diffusion models. We start by recalling
the method introduced in \citet{zhou2023denoising}, which leverages the
framework of diffusion models to propose an interpolation procedure.

\paragraph{Denoising Diffusion Bridge Models.}  As in
the previous section, we consider a paired setting with a coupling
$\Pi_{0,1} \in \mathcal{P}(\rset^d \times \rset^d)$. We aim at deriving a SDE
representation of $\Pbb = \Pi_{0,1} \Qbb_{|0,1}$. First, consider the forward
noising process given by $\Pbb_{|1}(\cdot|x_1) = \Pi_{0|1} \Qbb_{|0,1}(\cdot | x_1)$ for an arbitrary
$x_1 \in \rset^d$. This forward noising process is associated with
\begin{equation}
  \rmd \bfX_t = \{b_t(\bfX_t) + \sigma_t^2 \nabla \log \Qbb_{1|t}(x_1|\bfX_t)\} \rmd t + \sigma_t \rmd \bfB_t , \qquad \bfX_0 \sim \Pi_{0|1} . 
\end{equation}
This forward representation is associated with a backward one, $(\bfY_{1-t})_{t \in \ccint{0,1}} = (\bfX_t)_{t \in \ccint{0,1}}$ where
\begin{equation}
  \label{eq:backward_ddbm}
  \rmd \bfY_t = \{-b_{1-t}(\bfY_t) - \sigma_{1-t}^2 \nabla \log \Qbb_{1|1-t}(\bfY_1|\bfY_t) + \sigma_{1-t}^2 \nabla \log \Pbb_{1-t|1}(\bfY_t | \bfY_1) \} \rmd t + \sigma_{1-t} \rmd \bfB_t . 
\end{equation}
If we consider $(\bfY_t)_{t \in \ccint{0,1}}$ such that $\bfY_1 \sim \Pi_1$ and
$(\bfY_t)_{t \in \ccint{0,1}}$ satisfies \eqref{eq:backward_ddbm}, then we
have that $(\bfY_t)_{t \in \ccint{0,1}} \sim \Pi_1 \Pbb_{|1} = \Pi_{0,1} \Qbb_{|0,1}$. Hence, we have that $(\bfY_t)_{t \in \ccint{0,1}} \sim \Pbb$ and in particular the coupling is preserved, i.e. $(\bfY_0, \bfY_1) \sim \Pi_{0,1}$ as in Augmented Bridge Matching. While
$\nabla \log \Qbb_{1|1-t}$ is tractable, $\nabla \log \Pbb_{1-t|1}$ is not and
needs to be approximated. However, it can be shown that 
\begin{equation}
  \textstyle \nabla \log \Pbb_{1-t|1}(x_{1-t}|x_1) = \int_{\rset^d} \nabla \log \Qbb_{1-t|0,1}(x_{1-t}|x_0,x_1) \rmd \Pbb_{0|1-t,1}(x_0|x_{1-t},x_1) . 
\end{equation}
Hence, we get that
\begin{align}
  \textstyle \nabla \log \Pbb_{t|1}(x_{t}|x_1) = \argmin_s \mathbb{E}_{\Pi_{0,1}\Qbb_{t|0,1}}[ \| s(t, \bfX_{t}, \bfX_1) - \nabla \log \Qbb_{t|0,1}(\bfX_{t}|\bfX_0,\bfX_1) \|^2 ] ,
\end{align}
which can be approximately computed since $\Qbb_{t|0,1}$ is tractable, see
\cite[Theorem 1]{zhou2023denoising}. At inference time, \citet{zhou2023denoising}
sample from $\Pi_1$ and compute $(\bfY_t)_{t \in \ccint{0,1}}$ using
\eqref{eq:backward_ddbm} and the score approximation. 

\paragraph{Unifying the models.} We now show that DDBM is equivalent to the
\emph{augmented} bridge matching procedure we propose.  By definition
$\Qbb$ is associated with
$\rmd \bfX_t = b_t(\bfX_t) \rmd t + \sigma_t \rmd \bfB_t$ and
$\bfX_0 \sim \Qbb_0$, see \eqref{eq:Q_sde}. Using the time-reversal formula, we get that $\Qbb$ is also 
associated with $(\bfY_{1-t})_{t \in \ccint{0,1}}$ such that
\begin{equation}
  \label{eq:time_reversal_Qbb}
  \rmd \bfY_t = \{-b_{1-t}(\bfY_t) + \sigma_{1-t}^2 \nabla \log \Qbb_{1-t}(\bfY_t) \} \rmd t + \sigma_t \rmd \bfB_t , \qquad \bfY_1 \sim \Qbb_1 . 
\end{equation}
Therefore, in order to sample from $\Pi_1 \Qbb_{|1}$, first sample
$\bfY_1 \sim \Pi_1$ and then sample $(\bfY_t)_{t \in \ccint{0,1}}$ according to
\eqref{eq:time_reversal_Qbb}. Next, we apply \Cref{prop:sde-repr-mixt} with
$b_t$ replaced $-b_{1-t}+ \sigma_{1-t}^2 \nabla \log \Qbb_ {1-t}$, and $\Pi_0$
replaced by $\Pi_1$. The augmented bridge matching procedure of \Cref{prop:sde-repr-mixt} is given by $\bfY_0 \sim \Pi_1$ and
  \begin{align}
    \label{eq:augmented_bridge_matching_reversal}
    &\textstyle \rmd \bfY_t = \{ -b_{1-t}(\bfX_t) + \sigma_{1-t}^2 \nabla \log \Qbb_{1-t}(\bfY_t) \\ & \qquad \qquad \qquad +  \sigma_{1-t}^2 \mathbb{E}_{\Pbb_{0|1,1-t}}[ \nabla \log \Qbb_{0|1-t}(\bfY_1|\bfY_0) \ | \ \bfY_0, \bfY_t]) \} \rmd t + \sigma_{1-t} \rmd \bfB_t .
  \end{align}
  The next proposition shows that \eqref{eq:augmented_bridge_matching_reversal}
  and \eqref{eq:backward_ddbm} are the same SDEs.

  \begin{proposition}
    \label{prop:identity_nabla_log}
For any $t \in \ccint{0,1}$ we have $\nabla \log \Qbb_t + \mathbb{E}_{\Pbb_{0|1,t}}[\nabla \log \Qbb_{0|t}] = \nabla \log \Pbb_{t|1} - \nabla \log \Qbb_{1|t}$.
  \end{proposition}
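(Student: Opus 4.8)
The plan is to reduce the claimed identity to a pointwise relation for the reference measure $\Qbb$ and then average that relation against the conditional $\Pbb_{0|1,t}$. Throughout, $\nabla$ denotes $\nabla_{x_t}$, and I write $\Qbb_t(x_t)$, $\Qbb_{s|t}(x_s|x_t)$, etc.\ for the relevant densities, assumed to exist and be smooth enough.

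First I would exploit that $\Qbb$, being associated with an SDE, is Markov, so $x_0$ and $x_1$ are conditionally independent given $x_t$, i.e.\ $\Qbb_{1|0,t}(x_1|x_0,x_t) = \Qbb_{1|t}(x_1|x_t)$. Applying the chain rule to the joint density at times $0,t,1$ in two different orders then gives
\[
\Qbb_t(x_t)\,\Qbb_{0|t}(x_0|x_t)\,\Qbb_{1|t}(x_1|x_t) = \Qbb_{0,t,1}(x_0,x_t,x_1) = \Qbb_{0,1}(x_0,x_1)\,\Qbb_{t|0,1}(x_t|x_0,x_1).
\]
Taking $\nabla\log$ and noting that $\Qbb_{0,1}(x_0,x_1)$ does not depend on $x_t$, I obtain the pointwise identity
\[
\nabla\log\Qbb_t(x_t) + \nabla\log\Qbb_{0|t}(x_0|x_t) + \nabla\log\Qbb_{1|t}(x_1|x_t) = \nabla\log\Qbb_{t|0,1}(x_t|x_0,x_1),
\]
valid for all $x_0,x_t,x_1$.

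Next I would fix $x_t,x_1$ and integrate this identity against $\Pbb_{0|1,t}(\rmd x_0|x_1,x_t)$. The first and third terms on the left do not depend on $x_0$, so they pass through the expectation, leaving
\[
\nabla\log\Qbb_t + \mathbb{E}_{\Pbb_{0|1,t}}[\nabla\log\Qbb_{0|t}] + \nabla\log\Qbb_{1|t} = \mathbb{E}_{\Pbb_{0|1,t}}[\nabla\log\Qbb_{t|0,1}(x_t|\bfX_0,x_1)].
\]
It then remains to identify the right-hand side with $\nabla\log\Pbb_{t|1}$. Since $\Pbb = \Pi_{0,1}\Qbb_{|0,1}$ shares the bridge of $\Qbb$, one has $\Pbb_{t|0,1} = \Qbb_{t|0,1}$, hence $\Pbb_{t|1}(x_t|x_1) = \int \Pbb_{0|1}(x_0|x_1)\,\Qbb_{t|0,1}(x_t|x_0,x_1)\,\rmd x_0$; differentiating under the integral, dividing by $\Pbb_{t|1}$, and using Bayes' rule $\Pbb_{0|1}(x_0|x_1)\Qbb_{t|0,1}(x_t|x_0,x_1) = \Pbb_{t|1}(x_t|x_1)\,\Pbb_{0|t,1}(x_0|x_t,x_1)$ gives $\nabla\log\Pbb_{t|1}(x_t|x_1) = \mathbb{E}_{\Pbb_{0|1,t}}[\nabla\log\Qbb_{t|0,1}(x_t|\bfX_0,x_1)]$ (using $\Pbb_{0|1,t} = \Pbb_{0|t,1}$). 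Substituting this and rearranging yields exactly $\nabla\log\Qbb_t + \mathbb{E}_{\Pbb_{0|1,t}}[\nabla\log\Qbb_{0|t}] = \nabla\log\Pbb_{t|1} - \nabla\log\Qbb_{1|t}$.

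The argument is essentially bookkeeping with Bayes' rule; the genuine points of care are invoking the Markov property of $\Qbb$ for the conditional independence $\Qbb_{1|0,t} = \Qbb_{1|t}$, and justifying differentiation under the integral sign — both covered by the regularity assumptions already used for Doob's $h$-transform in \Cref{prop:doob-h-transform} and \Cref{prop:sde-repr-mixt}. I expect the trickiest part to be keeping straight which variable each $\nabla$ and each conditioning refers to, rather than any substantive analytic difficulty.
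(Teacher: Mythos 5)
Your proof is correct: the pointwise identity $\nabla\log\Qbb_t + \nabla\log\Qbb_{0|t} + \nabla\log\Qbb_{1|t} = \nabla\log\Qbb_{t|0,1}$ follows from the Markov property of $\Qbb$, and averaging it against $\Pbb_{0|1,t}$ together with the observation that $\nabla\log\Pbb_{t|1} = \mathbb{E}_{\Pbb_{0|t,1}}[\nabla\log\Qbb_{t|0,1}]$ (which uses $\Pbb_{t|0,1}=\Qbb_{t|0,1}$ and Bayes' rule) yields exactly the claimed identity. The paper states this proposition without supplying a proof, so there is nothing to compare against directly, but your derivation is the natural one and uses precisely the same Bayes-rule and bridge-decomposition manipulations that appear in the paper's proof of \Cref{prop:sde-repr-mixt}.
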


  Combining \Cref{prop:identity_nabla_log},
  \eqref{eq:augmented_bridge_matching_reversal} and \eqref{eq:backward_ddbm} and
  we can conclude that DDBM is an augmented bridge matching procedure for the
  time-reversed SDE, i.e. the two methods coincide if we invert the roles of $\Pi_0$ and $\Pi_1$ and the
  arrow of time.

\section{Related work}
\label{sec:related-work}

\paragraph{Bridge matching.} Bridge matching was first introduced by
\citet{peluchettinon}. It has recently been applied to high dimensional settings
in
\citet{guan2023i2isb,somnath2023aligned,delbracio2023inversion,tong2023conditional,albergo2023stochastic,albergo2022building,pooladian2023multisample}. The
deterministic counterpart to this procedure was first introduced in
\citet{lipman2022flow} and named \emph{flow matching}, see also
\citet{heitz2023iterative}. These approaches first sample both the initial and
terminal point according to a coupling, i.e. they assume access to a
\emph{paired} dataset (which might be given by the independent coupling). Then,
they consider an \emph{interpolation} between these two samples using a Brownian
bridge (or a linear interpolation in the case of
\citet{lipman2022flow}). Finally, they learn a drift (or velocity field) which
best approximates this non-Markovian dynamics. It turns out that this procedure
is the \emph{Markovian projection} \citep{gyongy1986mimicking} which has been
studied in finance. We note that it is possible to extend this procedure to the
case where only \emph{unpaired} datasets are available has been considered in
using iterative procedure and tools from Optimal Transport
\citep{liu2022flow,liu2022rectified,shi2023diffusion,peluchetti2023diffusion,tong2023conditional}.

\paragraph{Diffusion Schr\"odinger Bridge. } The theoretical properties of
Schr\"odinger Bridges \citep{schrodinger1932theorie} have been extensively
investigated using tools from probability and stochastic control theory
\citep{leonard2014survey,dai1991stochastic,chen2020optimal}.  In the context of
machine learning, the Diffusion Schr\"odinger Bridge algorithm was introduced in
\citet{debortoli2021diffusion,vargas2021solving,chen2021likelihood}.  These
methods have been extended to solve conditional simulation and  more general
control problems
\citep{shi2022conditional,thornton2022riemannian,liu2022deep,tamir2023transport}. Recently
\citet{shi2023diffusion,peluchetti2023diffusion} have introduced a new iterative
methodology to solve Schr\"odinger bridges based on Markovian and reciprocal
class projections.

\paragraph{Doob $h$-transform.} The Doob $h$-transform is an ubiquitous tool in
probability theory
\citep{palmowski2002technique,rogers2000diffusions,chung2006markov}. In
the context of diffusion models it has been leveraged in
\citet{heng2021simulating,liu2022let} to learn bridges. Recently, machine learning approaches have
also been proposed to learn Doob $h$-transforms in the context of sampling
\citep{chopin2023computational}.

\section{Experiments}
\label{sec:experiments}
In \Cref{sec:toy_xp} we first empirically verify that the Schr\"odinger Bridge is the only fixed point of the bridge matching procedure in a Gaussian setting. In \Cref{sec:multi_domain}, we illustrate the efficiency of the augmented bridge matching procedure in a multi-domain image-to-image translation setting.

\subsection{Toy experiment}
\label{sec:toy_xp}
First, we illustrate
  \Cref{prop:comp-with-bridge} in a simple Gaussian setting. We consider the
  case where $\Pi_0 = \Pi_1 = \mathrm{N}(0,1)$. We assume that
  $\Pbb_{0,1} = \mathrm{N}(0, \Sigma^\alpha)$, with
  $\Sigma^\alpha \in \rset^{d \times d}$ symmetric with
  $\Sigma^\alpha_{0,0} = \Sigma^\alpha_{1,1} = 1$ and
  $\Sigma^\alpha_{0,1} = \alpha$ with $\alpha \in \ooint{0,1}$. We consider
  $\Qbb$ associated with $(\sigma \bfB_t)_{t \in \ccint{0,1}}$. We have the
  following result.

  \begin{proposition}
    \label{prop:gaussian_example}
    The static Schr\"odinger Bridge $\Pi_{0,1}^\star$, solution of
    \eqref{eq:sb_static} is given by $\Pi^\star_{0,1} = \mathrm{N}(0, \Sigma^\star)$
    with $\Sigma^\star = \Sigma^{\alpha^\star}$ with $\alpha^\star = (\sigma^2/2)(\sqrt{1+4/\sigma^4} - 1)$.
    In addition, for any $\alpha \in \ooint{0,1}$ and
    $\Pi_{0,1} = \mathrm{N}(0, \Sigma^\alpha)$, we get that
    $\Pbb_{0,1}^{\mathcal{M}} = \mathrm{N}(0, \Sigma^{f(\alpha)})$ with $f$ explicit and given in the appendix.
   \end{proposition}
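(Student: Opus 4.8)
The strategy is to exploit that, in this example, every law is centred Gaussian and every transformation is linear, so both claims reduce to bookkeeping of $2\times 2$ covariance matrices; I write $\Sigma^\alpha$ for the symmetric matrix with unit diagonal and off-diagonal entry $\alpha$. For the static Schr\"odinger bridge I would work from the entropic optimal transport form \eqref{eq:entropic_ot}. The quadratic cost $\int\|x_0-x_1\|^2\rmd\mu$ depends on $\mu$ only through its covariance, while among laws with a prescribed covariance the Gaussian one maximises the differential entropy; hence for any competitor $\mu$ (automatically centred, with unit-variance marginals) the Gaussian $\tilde\mu$ with the same covariance is still admissible and has objective value no larger than $\mu$'s, so the minimiser can be sought among $\mathrm{N}(0,\Sigma^\alpha)$, $\alpha\in(-1,1)$. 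Substituting gives, up to an additive constant, the objective $2-2\alpha-\sigma^2\log(1-\alpha^2)$, which is strictly convex on $(-1,1)$; its stationarity condition is the quadratic $\alpha^2+\sigma^2\alpha-1=0$, whose unique root in $(0,1)$ is exactly $\alpha^\star=\tfrac{\sigma^2}{2}(\sqrt{1+4/\sigma^4}-1)$.

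For the Markovian projection I would first make the bridge matching drift explicit. From $\bfX_t=(1-t)\bfX_0+t\bfX_1+\sigma(t(1-t))^{1/2}\bfZ$ with $(\bfX_0,\bfX_1)\sim\mathrm{N}(0,\Sigma^\alpha)$, a moment computation gives $\Pbb_t=\mathrm{N}(0,v_t)$ with $v_t=1-c\,t(1-t)$ and $\Cov_{\Pbb}(\bfX_t,\bfX_1)=\rho_t=\alpha+(1-\alpha)t$, where $c:=2-2\alpha-\sigma^2$. Gaussianity yields $\mathbb{E}_{\Pbb_{1|t}}[\bfX_1]=(\rho_t/v_t)\bfX_t$, so the drift of \eqref{eq:markovian_projection} (which in the Brownian case is the bridge matching SDE of \Cref{sec:background}) equals $(\rho_t-v_t)\bfX_t^\star/((1-t)v_t)$. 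The key point is the identity $\rho_t-v_t=(1-t)\bigl(c\,t-(1-\alpha)\bigr)$, so the apparent singularity at $t=1$ cancels and $\bfX_t^\star$ solves the linear SDE
\begin{equation*}
  \rmd\bfX_t^\star = \beta_t\,\bfX_t^\star\,\rmd t + \sigma\,\rmd\bfB_t, \qquad \beta_t = \frac{c\,t-(1-\alpha)}{v_t}, \qquad \bfX_0^\star\sim\mathrm{N}(0,1),
\end{equation*}
which is well posed on $[0,1]$ because $v_t\ge 1-\tfrac14\max(c,0)>\tfrac12$ (indeed $c<2$).

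Solving this SDE gives the Gaussian process $\bfX_t^\star=\Phi_t\bfX_0^\star+\sigma\Phi_t\int_0^t\Phi_s^{-1}\,\rmd\bfB_s$ with $\Phi_t=\exp\bigl(\int_0^t\beta_s\,\rmd s\bigr)$. Since the stochastic integral is independent of $\bfX_0^\star$, one reads off $\Cov(\bfX_0^\star,\bfX_1^\star)=\Phi_1$, and checking that $m_t:=\Var(\bfX_t^\star)$ solves $m_t'=2\beta_t m_t+\sigma^2$, $m_0=1$, with $v_t$ as its unique solution — or simply invoking the mimicking theorem of \citet{gyongy1986mimicking} — gives $\Var(\bfX_1^\star)=v_1=1$, so that $\Pbb^{\mathcal{M}}_{0,1}=\mathrm{N}(0,\Sigma^{f(\alpha)})$ with $f(\alpha)=\Phi_1$. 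To evaluate $\Phi_1$ I would use the identity $\beta_t=\tfrac12\,v_t'/v_t-\sigma^2/(2v_t)$ (valid since $v_t'=c(2t-1)$ and $c/2-(1-\alpha)=-\sigma^2/2$): then $\int_0^1\beta_t\,\rmd t=\tfrac12\log(v_1/v_0)-\tfrac{\sigma^2}{2}\int_0^1\rmd t/v_t=-\tfrac{\sigma^2}{2}\int_0^1\rmd t/(1-c\,t(1-t))$, and completing the square in the denominator evaluates the last integral in closed form, yielding $f(\alpha)=\exp\!\bigl(-2\sigma^2\,(c(4-c))^{-1/2}\arctan\sqrt{c/(4-c)}\bigr)$ when $c\in(0,4)$ (the analogous $\operatorname{arctanh}$ expression when $c<0$, and $e^{-\sigma^2/2}$ when $c=0$). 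Note $f(\alpha)\in(0,1)$, and one checks that $f$ has $\alpha^\star$ as its unique fixed point in $(0,1)$, in agreement with \Cref{prop:comp-with-bridge}.

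I expect the main difficulty to lie not in any single calculation but in the two places where something genuinely has to be noticed: the cancellation of the $1/(1-t)$ factor in the bridge matching drift (equivalently, verifying that the ``mild assumptions'' hold here, i.e. that $v_t$ stays bounded away from $0$ so the drift is bounded and the linear SDE is non-degenerate up to $t=1$), and the decomposition $\beta_t=\tfrac12(\log v_t)'-\sigma^2/(2v_t)$, which is exactly what turns the exponent defining $f$ into an elementary integral. A secondary point requiring care is the rigorous justification that the Schr\"odinger-bridge minimiser is Gaussian, which is what collapses the infinite-dimensional problem of the first step to the one-variable optimisation over $\alpha$.
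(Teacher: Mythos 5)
Your proof is correct, and its overall skeleton matches the paper's: reduce the static problem to a one-parameter optimisation over $\Sigma^\alpha$, compute $\mathbb{E}_{\Pbb_{1|t}}[\bfX_1]$ explicitly, observe the cancellation of the $1/(1-t)$ factor, and read $\Pbb^{\mathcal{M}}_{0,1}$ off the resulting linear SDE (your $\beta_t$ coincides with the paper's $\kappa(t)$ after the substitution $c=2-2\alpha-\sigma^2$). Two sub-arguments genuinely differ, and in both places your route is more self-contained. For Gaussianity of the static minimiser the paper invokes an external reference, whereas you derive it from the entropic-OT form \eqref{eq:entropic_ot} via the maximum-entropy property of Gaussians at fixed covariance; your objective $2-2\alpha-\sigma^2\log(1-\alpha^2)$ and the paper's $-\tfrac{1}{2}\log(1-\alpha^2)-\alpha/\sigma^2+C$ are affinely related and yield the same stationarity condition $1-\alpha^2=\sigma^2\alpha$, hence the same $\alpha^\star$. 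For the Markovian projection the paper stops at an expression for $\mathrm{Cov}_{\Pbb^{\mathcal{M}}}(\bfX_0,\bfX_1)$ in terms of $K(1)=\int_0^1\kappa(s)\,\rmd s$ quoted from a representation formula of Barczy and Kern, so its $f$ is ``explicit'' only in that implicit sense; you actually evaluate it, using the identity $\beta_t=\tfrac{1}{2}(\log v_t)'-\sigma^2/(2v_t)$ together with $v_0=v_1=1$ to obtain $f(\alpha)=\rme^{K(1)}=\exp\bigl(-2\sigma^2(c(4-c))^{-1/2}\arctan\sqrt{c/(4-c)}\bigr)$ for $c\in(0,4)$, with the stated variants for $c\le 0$. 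Your identification $f(\alpha)=\Phi_1=\rme^{K(1)}$, i.e.\ the cross-covariance of a linear SDE started from a unit-variance Gaussian, is also the transparent form of the answer: it manifestly lies in $(0,1)$, as a correlation of two standard Gaussians must, which is a worthwhile sanity check against the formula the paper transcribes from the reference (as printed there, that expression is bounded below by $1$). The only point I would ask you to spell out is uniqueness in the static step: strict convexity on the Gaussian slice, plus strictness of the maximum-entropy inequality for non-Gaussian competitors, gives uniqueness of $\Pi^\star_{0,1}$ among all admissible couplings and not merely among Gaussians.
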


   Finally, in \Cref{fig:gaussian_example}, we empirically illustrate that
   $f(\alpha) = \alpha$, i.e. $\Pi_{0,1} = \Pbb_{0,1}^{\mathcal{M}}$ only if
   $\alpha = \alpha^\star$ for different values of $\sigma$. This means that,
   even in this simplified Gaussian setting, the Markovian projection only
   preserves the coupling if and only if $\Pi_{0,1}$ is the Schr\"odinger
   Bridge. 

      \begin{wrapfigure}{r}{0.5\textwidth}
     \begin{center}
     \includegraphics[width=.6\linewidth]{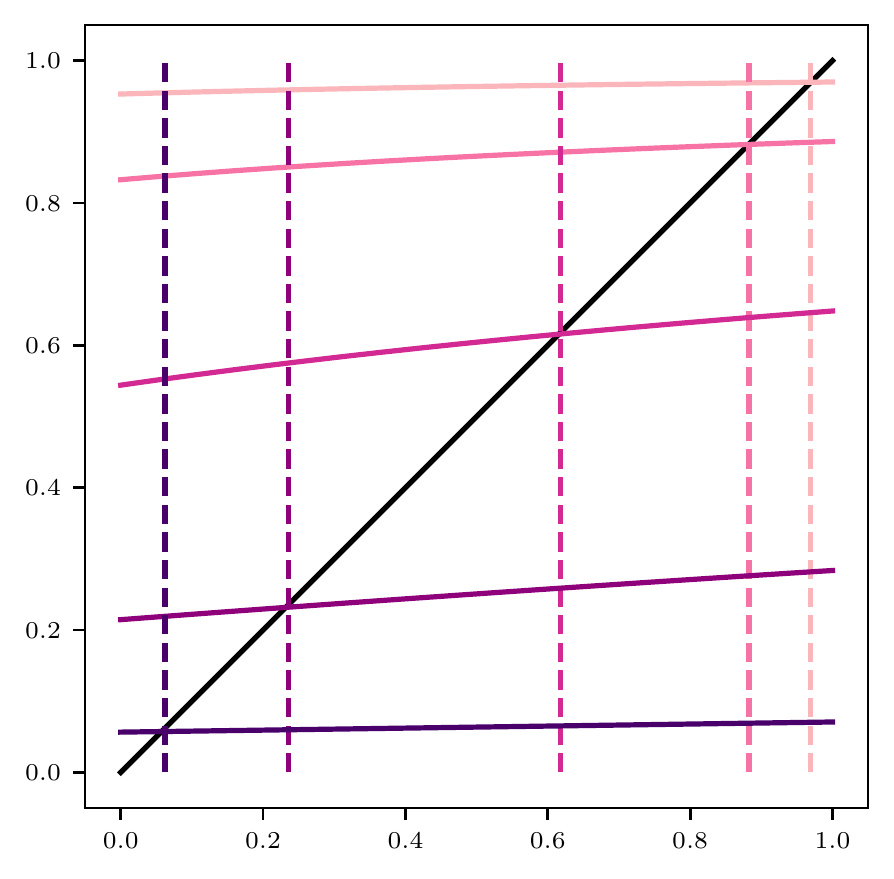}
     \caption{$f(\alpha)$ as a function of $\alpha$ for
       $\sigma \in \{0.25, 0.5, 1., 2.\}$ (dark to clear). Vertical lines
       correspond to the value of $\alpha^\star$ for the corresponding value of
       $\sigma$. Note that $f(\alpha) = \alpha$ if and only if $\alpha = \alpha^\star$, i.e. the coupling is preserved if and only if the original coupling is the Schr\"odinger Bridge.}
     \label{fig:gaussian_example}
     \end{center}
   \end{wrapfigure}

\subsection{Multi-domain image-to-image translation}
\label{sec:multi_domain}

Next, we test out our AugBM on multi-domain image-to-image translation.
Our goal is to train a single diffusion model with versatile capabilities, enabling it to perform translations across multiple domains.
Specifically, we consider two popular image-to-image translation tasks from pix2pix \citep{isola2017image}, namely \texttt{edges2shoes} and \texttt{edges2handbags}, and the colorization task on ImageNet dataset.
For each task, we train an AugBM to transfer between two provided domains in a bidirectional manner, without prior knowledge of which domain is in focus.
We compare our AugBM mainly to I$^2$SB \citep{guan2023i2isb}, which can be viewed as the special case of our AugBM without augmentation. 
To ensure a fair comparison, we adopt the same setup from I$^2$SB, where we 
parametrize {\color{red} $\hat{x}_1^\theta$} using a U-Net \citep{ronneberger2015u} initialized with the unconditional ADM checkpoint \citep{nichol2021beatgans}. We maintain consistency across all training hyperparameters to ensure that any observed performance disparities are solely attributed to the algorithmic distinctions between the models.
All images are in 256$\times$256 resolution.

\Cref{fig:pix2pix} reports the qualitative results, which encompass both the generation processes (in the odd rows) and the predicted couplings (in the even rows), i.e., {\color{red}$\bfX_T^\varepsilon$}.
It's worth noting a significant deviation in the generation processes between the two models beyond the midpoint, i.e., after $t\ge0.5$. In the case of our AugBM, it effectively transports samples to their correct target domains. In contrast, I$^2$SB tends to return them to their original source, effectively recreating the same source image. 
Indeed, for bidirectional translation tasks, it is extremely difficult to determine the target direction based merely on the intermediate samples, especially at the midpoint, without additional information, such as the augmentation utilized in AugBM.

It is essential to note that, in such cases, \emph{both models preserve the marginal, but only our AugBM preserves the coupling}. 
\Cref{fig:pix2pix_fid} provides quantitative demonstration,
where we report the Frechet Inception Distance (FID; \citep{fid}) w.r.t. the joint domain as a measure of marginal,
and the averaged FID w.r.t. individual domains---as a measure of coupling. 
All FID values are computed w.r.t. the validation statistics.\footnote{
    \texttt{edges2shoes} and \texttt{edges2handbags} entail their own validation sets.
    For ImageNet colorization, we use the same 10k ImageNet validation subset as in prior works \citep{saharia2022palette,guan2023i2isb}.
}
Although both models manage to achieve low marginal FIDs (with AugBM maintaining a performance advantage), our AugBM surpasses I$^2$SB when it comes to coupling FIDs, particularly in scenarios with high number of function evaluation (NFE)  regimes.
%

   \begin{figure}[h]
   \vskip -0.05in
     \centering
     \includegraphics[width=\linewidth]{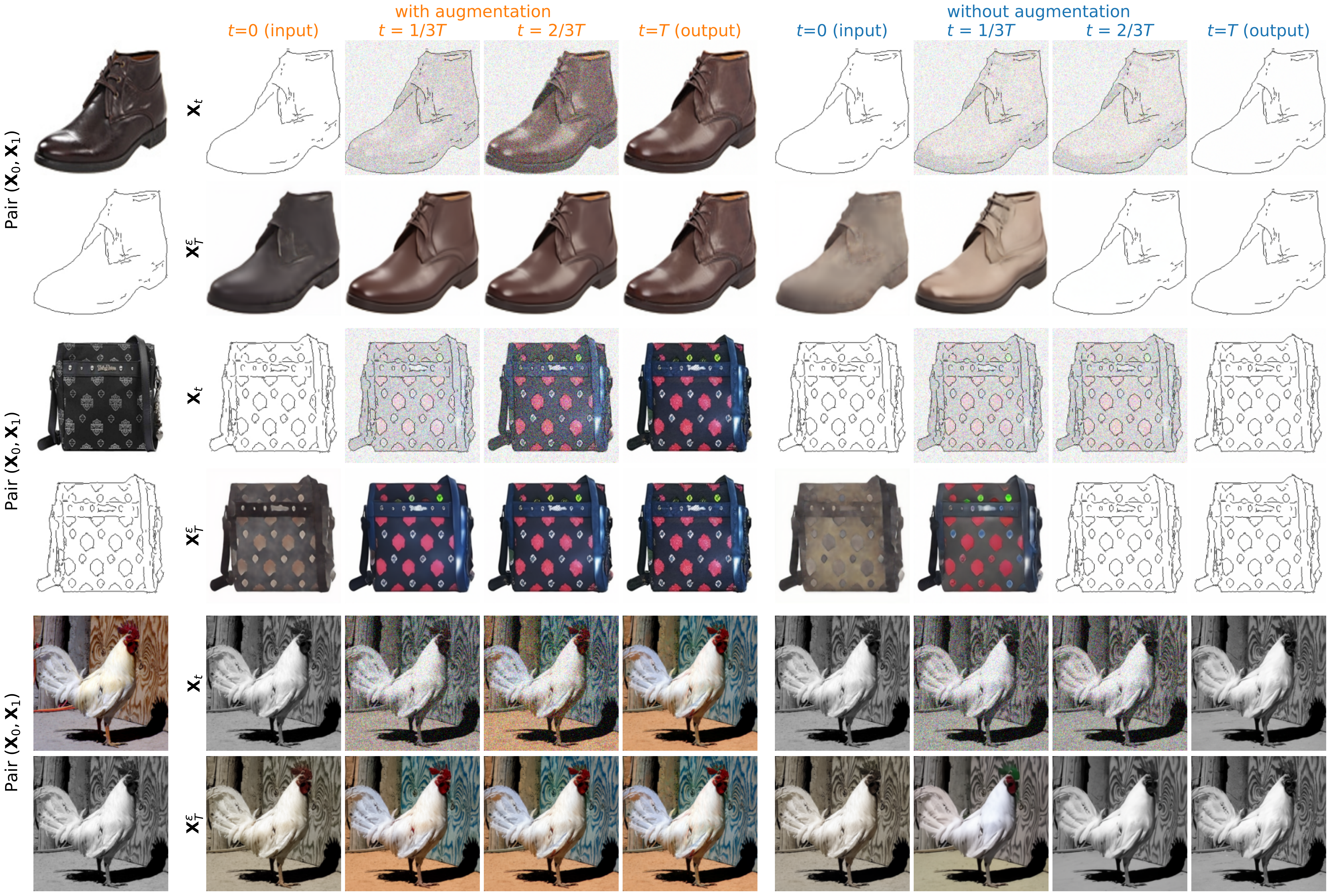}
     \caption{
        Qualitative results between our augmented bridge matching (2$^\text{nd}$-5$^\text{th}$ columns) and standard bridge matching (6$^\text{th}$-9$^\text{th}$ columns).
        We include both the generation trajectories, $\bfX_t$, and the predicted coupling, $\bfX_T^\varepsilon$, at different time steps. 
    }
     \label{fig:pix2pix}
     \vskip 0.1in
     \includegraphics[width=.6\linewidth]{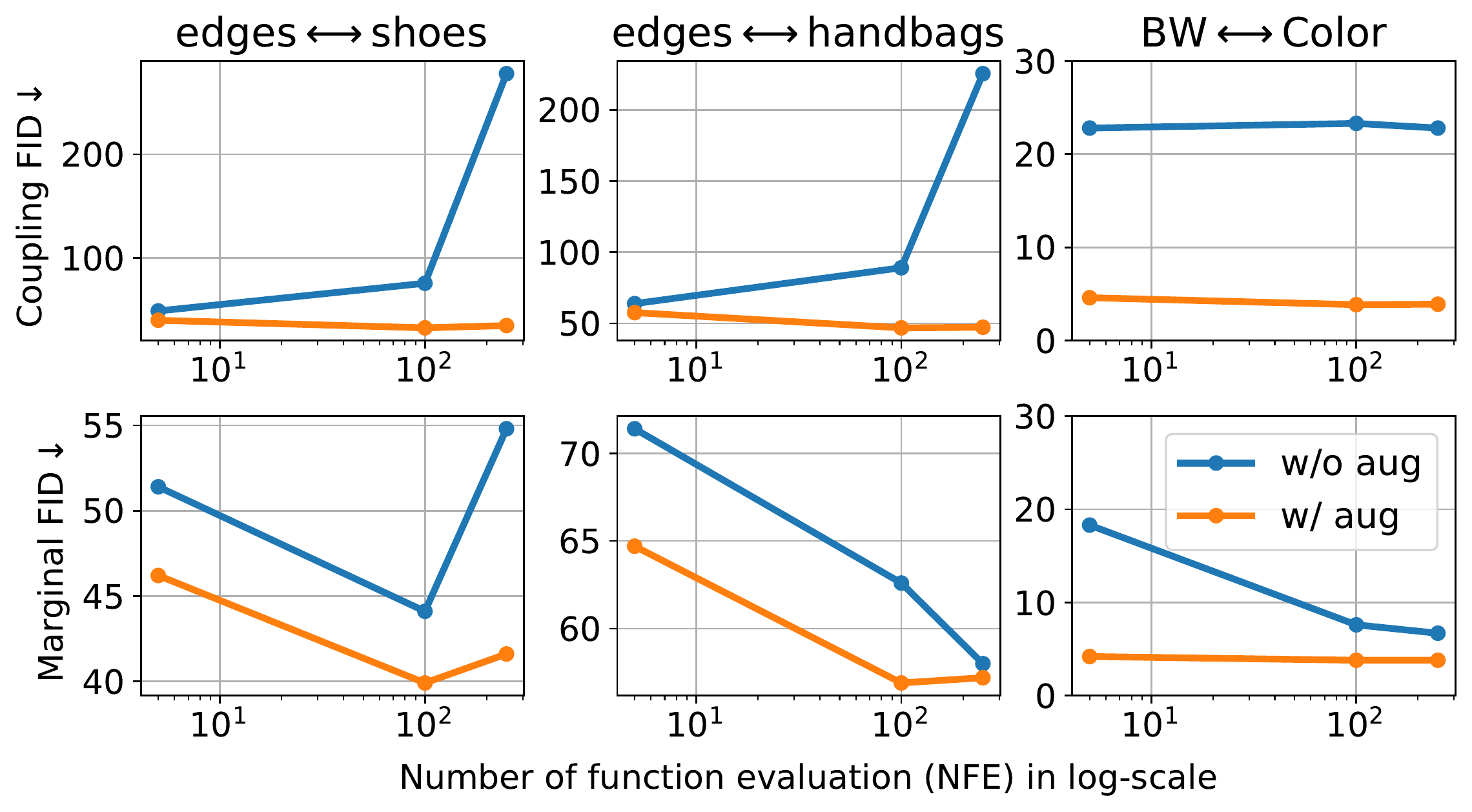}
     \caption{
        Quantitative results between standard bridge matching (marked blue) and our augmented bridge matching (marked orange). We report three pix2pix \citep{isola2017image} datasets in each column. It is clear that our augmented bridge matching not only preserve the coupling (first row), but also yields better marginal (second row).
    }
     \label{fig:pix2pix_fid}
     \vskip -0.1in
   \end{figure}

\section{Discussion}
\label{sec:conclusion}

In this work, we introduce Augmented Bridge Matching, a simple modification of the original Bridge Matching methodology which relaxes the Markovian property and preserves the original training coupling. One of our main contribution is to clarify what properties are preserved by the bridge matching procedure. In particular, we show that the training coupling is preserved if and only if it is given by the optimal transport coupling. Our main conclusion is that if the \textit{pairing} properties of the training dataset are of importance then one should use \textit{augmented} bridge matching.

Several methodological challenges remain to be addressed. First, while we have proposed a smooth interpolation between the original bridge matching and the augmented bridge matching, i.e. by conditioning on $\bfX_{\alpha t}$ with $\alpha \in \ccint{0,1}$, it is not clear which properties are preserved if $\alpha \in (0,1)$. In practice the loss of the Markovian property and the extra conditioning might make the training more difficult by increasing the variance of the loss. This is especially true if the training coupling is highly entropic, as illustrated in \Cref{sec:entropic_coupling}. In future work, we will further investigate the interaction between the loss variance and the entropy of the training coupling.

\section*{Acknowledgements}
  GHL, TC, ET are supported by ARO Award \#W911NF2010151 and DoD Basic Research Office Award HQ00342110002. VDB would like to thank Arnaud Doucet and James Thornton for their valuable advice and suggestions.

\bibliographystyle{iclr2024_conference}
\bibliography{bibliography}

\appendix

\section*{Organization of the appendix}

In \Cref{sec:basics-stoch-calc}, we recall some basics of stochastic calculus. In \Cref{sec:proof_comp}, we prove \Cref{prop:comp-with-bridge}. In \Cref{sec:proof-sde-repr}, we prove \Cref{prop:sde-repr-mixt}. We complement \Cref{prop:comp-with-bridge} with an in-depth study of the Gaussian case in \Cref{sec:proof-crefpr-example}. Additional image examples are presented in  \Cref{sec:more-images}.

\section{Basics of stochastic calculus}
\label{sec:basics-stoch-calc}

We start by recalling some useful definitions from stochastic calculus.

\begin{definition}[Infinitesimal generator]
  An operator $\mathcal{A}$ is an infinitesimal generator if there exists $b$
  measurable such that for any $\varphi \in \rmc_c^2(\rset^d)$,
  $t \in \ccint{0,1}$ and $x \in \rset^d$
  \begin{equation}
    \label{eq:infinitesimal_generator}
    \mathcal{A}_t(\varphi)(x) = \langle b_t(x), \nabla \varphi(x) \rangle + (1/2) \Delta \varphi(x) . 
  \end{equation}
\end{definition}

Equipped with the definition of infinitesimal generator, we define solutions to
martingale problem.

\begin{definition}[Martingale solution]
  We say that a path measure $\Pbb \in \mathcal{P}(\rmc(\ccint{0,1}, \rset^d))$
  is a \emph{solution to a martingale problem} with infinitesimal generator
  $\mathcal{A}$ if for any $\varphi \in \rmc_c^2(\rset^d)$,
  $u(\bfX_t) - \int_0^t \mathcal{A}_s(\varphi)(\bfX_s) \rmd s$ is a local
  $\Pbb$-martingale. 
\end{definition}

In \Cref{sec:background}, we say that
$\Pbb \in \mathcal{P}(\rmc(\ccint{0,1}, \rset^d)$ is associated with
$\rmd \bfX_t = b_t(\bfX_t) \rmd t + \rmd \bfB_t$ if $\Pbb$ is a solution to the
martingale problem with infinitesimal generator
\eqref{eq:infinitesimal_generator}. We refer to \citet{stroock2007multidimensional} for more details about the notion of martingale problem.
 
\section{Proof of \Cref{prop:comp-with-bridge}}
\label{sec:proof_comp}


We recall that $\Pbb = \Pi_{0,1} \Qbb_{|0,1}$, where
$\Qbb_{|0,1}$ is a given bridge, usually a Brownian bridge. In what follows, we
assume that $\Qbb$ is associated with
$\rmd \bfX_t = b_t(\bfX_t) \rmd t + \sigma_t \rmd \bfB_t$ with
$b: \ \ccint{0,1} \times \rset^d \to \rset^d$ and
$\sigma: \ [0,1] \to (0,+\infty)$. We recover the Brownian case if $b=0$ and
$\sigma_t = \sigma > 0$. We recall that the Markovian projection $\Pbb^{\mathcal{M}}$ is associated with 
\begin{equation}
\label{eq:markovian_projection_appendix}
      \textstyle \rmd \bfX_t^{\mathcal{M}} = \{ b_t(\bfX_t^{\mathcal{M}}) + \sigma_t^2 \mathbb{E}_{\Pbb_{1|t}}[ \nabla \log \Qbb_{1|t}(\bfX_1^{\mathcal{M}}|\bfX_t^{\mathcal{M}}) \ | \ \bfX_t^{\mathcal{M}}] \} \rmd t + \sigma_t \rmd \bfB_t , \quad \bfX_0^{\mathcal{M}} \sim \Pi_0 .
    \end{equation}

We also consider the following assumptions.

\begin{assumption}
  \label{assum:nice_diffusion}
  $(t, x_t) \mapsto \mathbb{E}_{\Pi_{1|t}}[\nabla \log
    \Qbb_{1|t}(\bfX_1|\bfX_t) \ | \ \bfX_t=x_t]$ are locally Lipschitz and there
  exist $C > 0 $, $\psi \in \rmC(\ccint{0,1}, \rset_+)$ such that for any
  $t \in \ccint{0,1}$ and $x_0, x_t \in \rset^d$, we have
 \begin{align}
        &\normLigne{\mathbb{E}_{\Pi_{1|t}}[\nabla \log
    \Qbb_{1|t}(\bfX_1|\bfX_t) \ | \ \bfX_t=x_t]} \leq C\psi(t)(1 + \normLigne{x_t}) .
  \end{align}
\end{assumption}

In addition, we consider the following assumption ensuring that the Doob $h$-transform is well-defined.

\begin{assumption}
  \label{assum:good_function_palmowski}
  For any $x_0 \in \rset^d$, $\Pi_{1|0}$ is absolutely continuous
  w.r.t. $\Qbb_{1|0}$.  For any $x_0 \in \rset^d$, let $\varphi_{1|0}$ be given
  for any $x_1 \in \rset^d$ by
  $\varphi_{1|0}(x_1|x_0) = \rmd \Pi_{1|0}(x_1|x_0) / \rmd
  \Qbb_{1|0}(x_1|x_0)$ and assume that for any $x_0 \in \rset^d$,
  $x_1 \mapsto \varphi_{1|0}(x_1|x_0)$ is bounded. For any $x_0 \in \rset^d$,
  let $\varphi_{t|0}$ given for any $x_t \in \rset^d$ and $t \in \ccint{0,1}$ by
  \begin{equation}
    \label{eq:def_h_transform}
    \textstyle
    \varphi_{t|0}(x_t|x_0) = \int_{\rset^d} \varphi_{1|0}(x_1|x_0) \rmd \Qbb_{1|t} (x_1|x_t) .
  \end{equation}
  Finally, we assume that for any $x_0 \in \rset^d$,
  $(t, x_t) \mapsto 1/\varphi_{t|0}(x_t|x_0)$ and
  $(t, x_t) \mapsto (1/2) \Delta \varphi_{t|0}(x_t|x_0)$ are bounded.
\end{assumption}

Finally, we assume that the growth of the Doob $h$-transform is controlled.

\begin{assumption}
  \label{assum:doob_h_defined}
  For any $x_0 \in \rset^d$, there exists $C \geq 0$ such that for
  any $t \in \ccint{0,1}$ and $x_t \in \rset^d$,
  $\normLigne{\nabla \log \varphi_{t|0}(x_t|x_0)} \leq C(1 +\normLigne{x_0}+\normLigne{x_t})$. 
\end{assumption}

These assumptions will allow us to apply \cite[Lemma 6]{shi2023diffusion}. However, we emphasize that these conditions are not tight and could be improved. We now state our main results.

  \begin{proposition}
    \label{prop:comp-with-bridge-appendix}
    Assume that $\Qbb$ is a Brownian motion. Assume \emph{\Cref{assum:nice_diffusion}}, \emph{\Cref{assum:good_function_palmowski}} and  \emph{\Cref{assum:doob_h_defined}}
    . In addition, assume that
    $\mathrm{H}(\Pi_i) \in \ooint{-\infty,+\infty}$ and that
    $\int_{\rset^d} \normLigne{x}^2 \rmd \Pi_i(x) < +\infty$ for
    $i \in \{0,1\}$. Let $\Pi_{0,1}^\star$ be the \emph{static} Schr\"odinger
    Bridge \begin{equation}
      \label{eq:sb_static_appendix}
    \Pi_{0,1}^\star = \argmin \ensembleLigne{\KLLigne{\Pbb_{0,1}}{\Qbb_{0,1}}}{\Pbb \in \mathcal{P}(\rset^d\times\rset^d), \ \Pbb_0=\Pi_0, \ \Pbb_1 = \Pi_1} . 
  \end{equation}Then,
  we have that $\Pbb^{\mathcal{M}}$ associated with
  \eqref{eq:markovian_projection_appendix} satisfies
  $\Pbb^\mathcal{M}_{0,1} = \Pbb_{0,1} \Qbb_{|0,1}$ if and only if
  $\Pbb_{0,1}=\Pi_{0,1}^\star$.
  \end{proposition}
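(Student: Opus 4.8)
\emph{Strategy.} The plan is to recast the claim in the reciprocal/Markovian‑projection language of \citet{shi2023diffusion,leonard2022feynman} and then exploit two Pythagorean identities for the Kullback--Leibler divergence. Write $\mathcal{R}(\Qbb) = \{\mu_{0,1}\Qbb_{|0,1} : \mu_{0,1}\in\mathcal{P}(\rset^d\times\rset^d)\}$ for the reciprocal class of $\Qbb$, and recall the reciprocal projection $\mathrm{proj}_{\mathcal{R}}(\Pbb) = \Pbb_{0,1}\Qbb_{|0,1}$ (which merely resamples the bridges) and the Markovian projection $\mathrm{proj}_{\calM}(\Pbb) = \argmin_{\Mbb\in\calM}\KLLigne{\Pbb}{\Mbb}$. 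The first step is to identify, for $\Pbb = \Pi_{0,1}\Qbb_{|0,1}$, the measure $\Pbb^{\calM}$ of \eqref{eq:markovian_projection_appendix} with $\mathrm{proj}_{\calM}(\Pbb)$: conditionally on its endpoints $\Pbb$ solves $\rmd\bfX_t = \{b_t(\bfX_t)+\sigma_t^2\nabla\log\Qbb_{1|t}(\bfX_1|\bfX_t)\}\rmd t + \sigma_t\rmd\bfB_t$, so its infinitesimal drift along a path is $b_t(\bfX_t)+\sigma_t^2\nabla\log\Qbb_{1|t}(\bfX_1|\bfX_t)$, and averaging it against the present state collapses it to the drift in \eqref{eq:markovian_projection_appendix}; that this is licit and that all the SDEs and projections are well posed is exactly what \Cref{assum:nice_diffusion}--\Cref{assum:doob_h_defined} and \cite[Lemma 6]{shi2023diffusion} provide. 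The finite‑entropy and second‑moment hypotheses make $\KLLigne{\Pbb}{\Qbb} = \KLLigne{\Pi_{0,1}}{\Qbb_{0,1}}$ finite and guarantee existence of the dynamic Schr\"odinger bridge $\Pbb^\star$; disintegrating $\KLLigne{\cdot}{\Qbb}$ over the endpoints gives $\Pbb^\star = \Pi_{0,1}^\star\Qbb_{|0,1}$, and classically $\Pbb^\star$ is Markov and is the \emph{unique} element of $\calM\cap\mathcal{R}(\Qbb)$ with marginals $\Pi_0,\Pi_1$.

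\emph{Sufficiency.} If $\Pi_{0,1}=\Pi_{0,1}^\star$, then $\Pbb = \Pi_{0,1}^\star\Qbb_{|0,1} = \Pbb^\star$ is Markov, hence $\mathrm{proj}_{\calM}(\Pbb^\star)=\Pbb^\star$ (a Markov measure is its own Markovian projection, since $\KLLigne{\Pbb^\star}{\cdot}$ is minimized over $\calM$ at value $0$, attained only at $\Pbb^\star$; equivalently, the conditional‑expectation drift in \eqref{eq:markovian_projection_appendix} reconstructs the Doob drift of $\Pbb^\star$, exactly as in the computation in the proof of \Cref{prop:sde-repr-mixt} but without the $\bfX_0$‑conditioning). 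Therefore $\Pbb^{\calM}=\Pbb^\star$ and $\Pbb^{\calM}_{0,1}=\Pi_{0,1}^\star=\Pi_{0,1}$.

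\emph{Necessity.} Conversely, assume $\Pbb^{\calM}_{0,1}=\Pi_{0,1}$ and set $\Mbb := \Pbb^{\calM}=\mathrm{proj}_{\calM}(\Pbb)$. Since $\Mbb_{0,1}=\Pi_{0,1}=\Pbb_{0,1}$ we get $\mathrm{proj}_{\mathcal{R}}(\Mbb)=\Mbb_{0,1}\Qbb_{|0,1}=\Pbb$. Now combine the elementary chain‑rule identity obtained by disintegrating over the endpoints (valid since $\Pbb^\star\in\mathcal{R}(\Qbb)$),
\begin{equation}
  \KLLigne{\Mbb}{\Pbb^\star} = \KLLigne{\Mbb}{\mathrm{proj}_{\mathcal{R}}(\Mbb)} + \KLLigne{\mathrm{proj}_{\mathcal{R}}(\Mbb)}{\Pbb^\star} = \KLLigne{\Mbb}{\Pbb} + \KLLigne{\Pbb}{\Pbb^\star} ,
\end{equation}
with the Markovian Pythagorean identity of \citet{shi2023diffusion} (valid because $\Pbb\in\mathcal{R}(\Qbb)$, $\Pbb^\star\in\calM$, $\Pbb_0=\Pi_0=\Pbb^\star_0$),
\begin{equation}
  \KLLigne{\Pbb}{\Pbb^\star} = \KLLigne{\Pbb}{\mathrm{proj}_{\calM}(\Pbb)} + \KLLigne{\mathrm{proj}_{\calM}(\Pbb)}{\Pbb^\star} = \KLLigne{\Pbb}{\Mbb} + \KLLigne{\Mbb}{\Pbb^\star} .
\end{equation}
All terms are finite (each is bounded by $\KLLigne{\Mbb}{\Pbb^\star}\le\KLLigne{\Pbb}{\Pbb^\star}<\infty$), so substituting the first line into the second and cancelling $\KLLigne{\Pbb}{\Pbb^\star}$ leaves $\KLLigne{\Pbb}{\Mbb}+\KLLigne{\Mbb}{\Pbb}=0$, i.e.\ $\Pbb=\Mbb\in\calM$. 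Hence $\Pbb\in\calM\cap\mathcal{R}(\Qbb)$ with marginals $\Pi_0,\Pi_1$, so $\Pbb=\Pbb^\star$ by uniqueness, and in particular $\Pi_{0,1}=\Pbb_{0,1}=\Pbb^\star_{0,1}=\Pi_{0,1}^\star$.

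\emph{Main obstacle.} The combinatorial core — the two‑line telescoping that forces $\KLLigne{\Pbb}{\Mbb}+\KLLigne{\Mbb}{\Pbb}=0$ — is short; the real work is technical. One must verify in the present non‑compact, unbounded‑drift setting that $\mathrm{proj}_{\calM}(\Pbb)$ is well defined and coincides with \eqref{eq:markovian_projection_appendix}, that the Markovian Pythagorean identity applies, and that every KL divergence above is finite so the cancellation is legitimate; this is exactly where \Cref{assum:nice_diffusion}--\Cref{assum:doob_h_defined}, \cite[Lemma 6]{shi2023diffusion}, and the classical existence/uniqueness theory for Schr\"odinger bridges under second‑moment and finite‑entropy conditions enter.
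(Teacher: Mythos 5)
Your proof is correct and follows essentially the same route as the paper's: both directions rest on the Markovian-projection Pythagorean identity of \cite[Lemma 6]{shi2023diffusion} together with the characterization of the Schr\"odinger bridge as the unique Markov measure in the reciprocal class of $\Qbb$ with the prescribed marginals. The only cosmetic difference is in the necessity step, where you use the exact reciprocal-projection decomposition of $\KLLigne{\Mbb}{\Pbb^\star}$ while the paper uses the data-processing inequality $\KLLigne{\Pbb^{\mathcal{M}}}{\Pbb^\star}\geq\KLLigne{\Pbb^{\mathcal{M}}_{0,1}}{\Pbb^\star_{0,1}}$ --- the former is just the equality refinement of the latter, and both yield $\Pbb=\Pbb^{\mathcal{M}}$ by the same cancellation (and share the same implicit reliance on $\KLLigne{\Pbb}{\Pbb^\star}<+\infty$).
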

  \begin{proof}
    First, assume that $\Pbb^{\mathcal{M}}_{0,1} = \Pbb_{0,1}$, i.e. the coupling is preserved. 
    Using \emph{\Cref{assum:nice_diffusion}}, \emph{\Cref{assum:good_function_palmowski}} and  \emph{\Cref{assum:doob_h_defined}}, we can apply \cite[Lemma 6]{shi2023diffusion} and we have the following result
    \begin{equation}
    \label{eq:projection_markov_app}
        \KLLigne{\Pbb}{\Pbb^\star} = \KLLigne{\Pbb}{\Pbb^{\mathcal{M}}} + \KLLigne{\Pbb^{\mathcal{M}}}{\Pbb^\star} ,
    \end{equation}
    where $\Pbb^\star$ is the Schr\"odinger bridge path measure. In particular, we have that $\Pbb^\star = \Pi^\star \Qbb_{|0,1}$. Therefore, we have that $\KLLigne{\Pbb}{\Pbb^\star} = \KLLigne{\Pbb_{0,1}}{\Pbb^\star_{0,1}}$. In addition, using the data processing inequality and that $\Pbb^\mathcal{M}_{0,1} = \Pbb_{0,1}$, we have that $\KLLigne{\Pbb^{\mathcal{M}}}{\Pbb^\star} \geq \KLLigne{\Pbb_{0,1}}{\Pbb^\star_{0,1}}$. Therefore, combining these results and \eqref{eq:projection_markov_app}, we have that 
    \begin{equation}
        \KLLigne{\Pbb}{\Pbb^\star} = \KLLigne{\Pbb_{0,1}}{\Pbb^\star_{0,1}} \geq \KLLigne{\Pbb}{\Pbb^{\mathcal{M}}} + \KLLigne{\Pbb_{0,1}}{\Pbb^\star_{0,1}} .
    \end{equation} Hence, we have $\Pbb = \Pbb^{\mathcal{M}}$. 
    In addition, we have
    that $\Pbb^{\mathcal{M}}$ is a Markov path measure,
    $\Pbb^{\mathcal{M}}_{|0,1} = \Qbb_{|0,1}$ and $\Pbb_0 = \Pi_0$,
    $\Pbb_1 = \Pi_1$. Combining \cite[Theorem
    2.14]{leonard2014reciprocal} and \cite[Theorem 2.12]{leonard2014survey}, we
    get that $\Pbb$ is the (unique) Schr\"odinger Bridge, which concludes the first part of the 
    proof. Note that the conditions of \cite[Theorem
    2.14]{leonard2014reciprocal} and \cite[Theorem 2.12]{leonard2014survey} are
    met since $\Qbb$ is a Brownian motion with $\Qbb_0 = \mathrm{Leb}$. Second,
    if $\Pi_{0,1}$ is the \emph{static} Schr\"odinger bridge, then
    $\Pi_{0,1} \Qbb_{|0,1}$ is the \emph{dynamic} Schr\"odinger bridge and hence
    Markov using \cite[Proposition 2.10]{leonard2014survey}. We conclude using \cite[Lemma
    6]{shi2023diffusion} that $\Pbb^{\mathcal{M}} = \Pbb$. 
  \end{proof}

  Let us briefly comment on the assumptions of \Cref{prop:comp-with-bridge}. The
  conditions $\mathrm{H}(\Pi_i) \in \ooint{-\infty,+\infty}$ and
  $\int_{\rset^d} \normLigne{x}^2 \rmd \Pi_i(x) < +\infty$ are assumptions on
  the marginals of the target coupling. The condition
  $\Pbb^\star = \Pi_{0,1}^\star \Qbb_{|0,1} \in \mathcal{M}$ is technical. It
  ensures that not only the Schr\"odinger bridge is Markov, which is known using
  \cite[Theorem 2.14]{leonard2014reciprocal} and \cite[Theorem
  2.12]{leonard2014survey}, but that it admits a SDE representation. This can
  also be ensured by imposing further conditions on the marginals, see
  \cite[Theorem 4.12]{leonard2011stochastic}. 

\section{Proof of \Cref{prop:sde-repr-mixt}}
\label{sec:proof-sde-repr}

  First, we recall that $\Pbb$ is given by $\Pbb = \Qbb \varphi_{0,1}$ with
  $\varphi_{0,1} = \tfrac{\rmd \Pi_{0,1}}{\rmd \Qbb_{0,1}}$. In particular, we
  have $\Pbb_{|0} = \Qbb_{|0} \varphi_{1|0}$, where
  $\varphi_{1|0} = \tfrac{\rmd \Pi_{1|0}}{\rmd \Qbb_{1|0}}$. Therefore, using \cite[Lemma 3.1, Lemma
  4.1]{palmowski2002technique}, the remark following \cite[Lemma
  4.1]{palmowski2002technique}, \tref{assum:nice_diffusion},
  \tref{assum:good_function_palmowski} and \tref{assum:doob_h_defined}, we get
  that $\Pi_{1|0}$ is Markov and associated with the distribution of
  $(\bfX_t)_{t \in \ccint{0,1}}$ given for any $t \in \ccint{0,1}$ by 
  \begin{equation}
    \label{eq:non_markov}
    \textstyle \bfX_t = \int_0^t    \nabla \log \varphi_{s|0}(\bfX_s|\bfX_0)  \rmd s + \bfB_t ,
  \end{equation}
  where for any $t \in \ccint{0,1}$, $x_0, x_t \in \rset^d$ we recall that 
  \begin{equation}
    \label{eq:varphi_int_def}
    \textstyle \varphi_{t|0}(x_t|x_0) = \int_{\rset^d} \varphi_{1|0}(x_1|x_0) \rmd \Qbb_{1|t}(x_1|x_t) . 
  \end{equation}
  First, we have that for any $t \in \ccint{0,1}$, $x_t, x_0 \in \rset^d$
  \begin{equation}
    \textstyle \Qbb_{t|0}(x_t|x_0) \varphi_{t|0}(x_t|x_0) = \int_{\rset^d} \Qbb_{t|0,1}(x_t|x_T,x_0) \rmd \Pi_{1|0}(x_1|x_0) = \Pbb_{t|0}(x_t|x_0) . 
  \end{equation}
  Therefore, we get that for any $t \in \ccint{0,1}$ and $x_t, x_0 \in \rset^d$
  \begin{equation}
    \label{eq:varphi_integrate}
    \varphi_{t|0}(x_t|x_0) = \tfrac{\rmd \Pbb_{t|0}(x_t|x_0)}{\rmd \Qbb_{t|0}(x_t|x_0)} . 
  \end{equation}
    In addition, we have the following identity for any $t \in \ccint{0,1}$, $x_0, x_t, x_1 \in \rset^d$
  \begin{equation}
    \Qbb_{1|0}(x_1|x_0) \Qbb_{t|0,1}(x_t|x_0,x_1) = \Qbb_{t|0}(x_t|x_0) \Qbb_{1|t}(x_1|x_t) .
  \end{equation}
  Using \eqref{eq:varphi_int_def}, this result and \eqref{eq:varphi_integrate},
  we get that for any $t \in \ccint{0,1}$ and $x_0,x_t \in \rset^d$
  \begin{align}
    \textstyle \nabla \log \varphi_{t|0}(x_t|x_0) &\textstyle= \int_{\rset^d} \tfrac{\Pbb_{1|0}(x_1|x_0) \Qbb_{1|t}(x_1|x_t)}{\Qbb_{1|0}(x_1|x_0) \varphi_{t|0}(x_t|x_0)} \nabla \log \Qbb_{1|t}(x_1|x_t) \rmd x_1 \\
                                                  &\textstyle= \int_{\rset^d} \tfrac{\Pbb_{1|0}(x_1|x_0) \Qbb_{t|0,1}(x_t|x_0,x_1)}{\Qbb_{t|0}(x_t|x_0) \varphi_{t|0}(x_t|x_0)} \nabla \log \Qbb_{1|t}(x_1|x_t) \rmd x_1 \\
    &\textstyle= \int_{\rset^d} \tfrac{\Pbb_{t,1|0}(x_t,x_1|x_0)}{\Pbb_{t|0}(x_t|x_0)} \nabla \log \Qbb_{1|t}(x_1|x_t) \rmd x_1 \\    
                                                  &= \textstyle  \int_{\rset^d} \nabla \log \Qbb_{1|t}(x_1|x_t) \rmd \Pbb_{1|t,0}(x_1|x_t,x_0) . 
  \end{align}
  Hence, combining this result and \eqref{eq:non_markov}, we get 
  \begin{equation}
    \label{eq:non_markov_2}
    \textstyle \bfX_t = \int_0^t \mathbb{E}_{\Pi_{1|t, 0}}[\nabla \log \Qbb_{1|t}(\bfX_1|\bfX_t) \ | \ \bfX_t, \bfX_0]  \rmd s + \bfB_t . 
  \end{equation}
  Let $\Mbb$ be Markov defined by
  $\rmd \bfX_t = + v_t(\bfX_t)  \rmd t + \rmd \bfB_t$,
  such that $\KLLigne{\Pi}{\Mbb} < +\infty$ with $v$ locally
  Lipschitz. Using \cite[Theorem 2.3]{leonard2012girsanov}, we get that
  \begin{equation}
    \textstyle \KLLigne{\Pi}{\Mbb} = \tfrac{1}{2} \int_0^1 \mathbb{E}_{\Pi_{0,t}}[\| \mathbb{E}_{\Pi_{1|t, 0}}[\nabla \log \Qbb_{1|t}(\bfX_1|\bfX_t)\ | \ \bfX_t, \bfX_0] - v_t(\bfX_t) \|^2] \rmd t . 
  \end{equation}
  In addition, we have that for any $t \in \ccint{0,1}$,
  \begin{align}
    &\mathbb{E}_{\Pbb_{0,t}}[\| \mathbb{E}_{\Pbb_{1|t, 0}}[\nabla \log \Qbb_{1|t}(\bfX_1|\bfX_t) \ | \ \bfX_t, \bfX_0]  - v_t(\bfX_t)\|^2] \\
    & \geq \mathbb{E}_{\Pbb_{0,t}}[\| \mathbb{E}_{\Pbb_{1|t, 0}}[\nabla \log \Qbb_{1|t}(\bfX_1|\bfX_t) \ | \ \bfX_t, \bfX_0]  - v^\star_t(\bfX_t) \|^2] ,
  \end{align}
  where
$v^\star_t(x_t) = \sigma_t^2 \CPELigne{\Pi_{1|t}}{\nabla \log
    \Qbb_{1|t}(\bfX_1|\bfX_t)}{\bfX_t=x_t}$ which concludes the first part of the proof.

\section{A study of the Gaussian case}
\label{sec:proof-crefpr-example}

In what follows, we illustrate
  \Cref{prop:comp-with-bridge} in a simple Gaussian setting. We consider the
  case where $\Pi_0 = \Pi_1 = \mathrm{N}(0,1)$. We assume that
  $\Pbb_{0,1} = \mathrm{N}(0, \Sigma^\alpha)$, with
  $\Sigma^\alpha \in \rset^{d \times d}$ symmetric with
  $\Sigma^\alpha_{0,0} = \Sigma^\alpha_{1,1} = 1$ and
  $\Sigma^\alpha_{0,1} = \alpha$ with $\alpha \in \ooint{0,1}$. We consider
  $\Qbb$ associated with $(\sigma \bfB_t)_{t \in \ccint{0,1}}$. We have the
  following result.

  \begin{proposition}
    \label{prop:gaussian_example_appendix}
    The static Schr\"odinger Bridge $\Pi_{0,1}^\star$, solution of
    \eqref{eq:sb_static} is given by $\Pi^\star_{0,1} = \mathrm{N}(0, \Sigma^\star)$
    with $\Sigma^\star = \Sigma^{\alpha^\star}$ with $\alpha^\star = (\sigma^2/2)(\sqrt{1+4/\sigma^4} - 1)$.
    In addition, for any $\alpha \in \ooint{0,1}$ and
    $\Pi_{0,1} = \mathrm{N}(0, \Sigma^\alpha)$, we get that
    $\Pbb_{0,1}^{\mathcal{M}} = \mathrm{N}(0, \Sigma^{f(\alpha)})$ with $f$ explicit.
   \end{proposition}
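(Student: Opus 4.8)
The statement consists of two independent computations: identifying $\Pi_{0,1}^\star$, and computing the endpoint marginal of the Markovian projection $\Pbb^{\mathcal{M}}$. For the first, the plan is to reduce \eqref{eq:sb_static} to a one-parameter problem. Since $\Qbb$ is the $\sigma$-Brownian motion with $\Qbb_0 = \Leb$, one has $\KLLigne{\mu_{0,1}}{\Qbb_{0,1}} = -\mathrm{H}(\mu_{0,1}) + (2\sigma^2)^{-1}\int \normLigne{x_0 - x_1}^2\,\rmd\mu_{0,1}(x_0,x_1) + \mathrm{const}$, so \eqref{eq:sb_static} is equivalent to the entropic optimal transport problem \eqref{eq:entropic_ot}. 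Next I would restrict to Gaussians: replacing any feasible $\mu_{0,1}$ by the centered Gaussian with the same covariance leaves $\int \normLigne{x_0-x_1}^2\,\rmd\mu_{0,1}$ unchanged, can only increase $\mathrm{H}(\mu_{0,1})$, and keeps the marginals equal to $\mathrm{N}(0,1)$; hence the infimum is attained at some $\mu_{0,1} = \mathrm{N}(0,\Sigma^\alpha)$ with $\alpha \in \ooint{-1,1}$. Plugging $\int \normLigne{x_0-x_1}^2\,\rmd\mu_{0,1} = 2(1-\alpha)$ and $\mathrm{H}(\mathrm{N}(0,\Sigma^\alpha)) = \tfrac12\log(1-\alpha^2) + \mathrm{const}$ turns the objective into $2(1-\alpha) - \sigma^2\log(1-\alpha^2) + \mathrm{const}$, which tends to $+\infty$ as $\alpha \to \pm 1$ and has a single interior critical point solving $\sigma^2\alpha = 1-\alpha^2$; the root in $\ooint{0,1}$ is $\alpha^\star = \tfrac{\sigma^2}{2}(\sqrt{1 + 4/\sigma^4} - 1)$, which is therefore the minimizer.

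For the second computation, the plan is to use that, under $\Pbb = \Pi_{0,1}\Qbb_{|0,1}$ with $\Qbb$ the $\sigma$-Brownian motion, the triple $(\bfX_0,\bfX_t,\bfX_1)$ is a centered Gaussian vector, with $\mathrm{Var}_{\Pbb}(\bfX_t) = v(t) := 1 - \gamma\, t(1-t)$ where $\gamma := 2(1-\alpha) - \sigma^2$, and $\mathbb{E}_{\Pbb}[\bfX_t\bfX_1] = t + (1-t)\alpha$. Here \eqref{eq:markovian_projection_appendix} reduces to the bridge matching SDE $\rmd\bfX_t^{\mathcal{M}} = (\mathbb{E}_{\Pbb_{1|t}}[\bfX_1] - \bfX_t^{\mathcal{M}})/(1-t)\,\rmd t + \sigma\,\rmd\bfB_t$, and $\mathbb{E}_{\Pbb_{1|t}}[\bfX_1] = (\mathbb{E}_{\Pbb}[\bfX_t\bfX_1]/v(t))\,\bfX_t$ by Gaussian conditioning; since the polynomial $t + (1-t)\alpha - v(t)$ is divisible by $(1-t)$ with quotient $\gamma t - (1-\alpha)$, the drift is the \emph{linear} field $a(t)\,\bfX_t^{\mathcal{M}}$ with $a(t) = (\gamma t - (1-\alpha))/v(t)$, which is bounded on $\ccint{0,1}$ since $\gamma < 4$ keeps $v$ bounded below by a positive constant. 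Consequently $\Pbb^{\mathcal{M}}$ is a centered Gaussian process; its marginals at times $0$ and $1$ equal $\mathrm{N}(0,1)$ because Markovian projection preserves marginals; and solving the scalar linear SDE gives $\mathbb{E}_{\Pbb^{\mathcal{M}}}[\bfX_0\bfX_1] = \exp\!\left(\int_0^1 a(s)\,\rmd s\right) =: f(\alpha)$, so that $\Pbb_{0,1}^{\mathcal{M}} = \mathrm{N}(0,\Sigma^{f(\alpha)})$.

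To make $f$ explicit I would write $a(s) = \tfrac12\, v'(s)/v(s) - \tfrac{\sigma^2}{2}\, v(s)^{-1}$, using $\gamma s - (1-\alpha) = \tfrac12 v'(s) - \tfrac{\sigma^2}{2}$ together with $v(s) = \gamma s^2 - \gamma s + 1$. Since $v(0) = v(1) = 1$, the first term integrates to $0$, leaving $f(\alpha) = \exp\!\left(-\tfrac{\sigma^2}{2}\int_0^1 \rmd s/(1 - \gamma s(1-s))\right)$. The remaining integral is elementary after completing the square: it yields an $\arctan$ expression when $\gamma > 0$, an $\operatorname{arctanh}$ expression when $\gamma < 0$, and the value $1-\alpha$ when $\gamma = 0$; substituting $\gamma = 2(1-\alpha)-\sigma^2$ gives the announced closed form for $f$.

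As a consistency check, \Cref{prop:comp-with-bridge} forces $\Pbb^{\mathcal{M}} = \Pbb$ precisely when $\alpha = \alpha^\star$, so necessarily $f(\alpha^\star) = \alpha^\star$; the strict inequality $f(\alpha) \neq \alpha$ for $\alpha \neq \alpha^\star$ is the content of the numerical illustration in \Cref{fig:gaussian_example}. I expect the main difficulty to be not conceptual but organizational: carrying out the Gaussian-restriction reduction cleanly, and --- above all --- the closed-form evaluation of $\int_0^1 \rmd s/(1 - \gamma s(1-s))$ with the case split on the sign of $\gamma$. A subtler point that needs care is checking that the apparent $1/(1-t)$ singularity of the bridge-matching drift genuinely cancels (this is exactly why $t + (1-t)\alpha - v(t)$ is divisible by $1-t$), which guarantees both that the projected SDE is well posed and that $f(\alpha)$ is finite.
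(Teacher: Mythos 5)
Your proposal is correct and follows the same two-step structure as the paper's proof: reduce the static Schr\"odinger bridge to a one-dimensional optimization over the correlation $\alpha$ (arriving at the same stationarity condition $\sigma^2\alpha^\star = 1-(\alpha^\star)^2$), then observe that Gaussian conditioning makes the bridge-matching drift linear, $\kappa(t)x_t$ with $\kappa(t) = (\gamma t - (1-\alpha))/v(t)$, exactly as in the paper. Two points where you diverge are worth noting. First, for the Gaussianity of the optimal plan you give a self-contained maximum-entropy argument (replacing a feasible coupling by the centered Gaussian with the same covariance preserves the quadratic cost and marginals while increasing entropy), whereas the paper delegates this to a cited reference; either is fine. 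Second, and more substantively, for the endpoint covariance you solve the linear SDE directly and obtain $\mathrm{Cov}_{\Pbb^{\mathcal{M}}}(\bfX_0,\bfX_1) = \exp\big(\int_0^1 \kappa(s)\,\rmd s\big)$, which you then evaluate in closed form via the decomposition $\kappa(s) = \tfrac12 v'(s)/v(s) - \tfrac{\sigma^2}{2}v(s)^{-1}$ and $v(0)=v(1)=1$, giving $f(\alpha) = \exp\big(-\tfrac{\sigma^2}{2}\int_0^1 \rmd s/(1-\gamma s(1-s))\big)$. The paper instead quotes from its reference the expression $1+\sigma^2\exp[2K(1)]\int_0^1\exp[-2K(s)]\,\rmd s$, which has the form of a variance of $\bfX_1$ rather than of $\mathbb{E}[\bfX_0\bfX_1]$; your $e^{K(1)}$ is the standard expression for the endpoint covariance of a linear SDE started from $\mathrm{N}(0,1)$, and it is both cleaner and more explicit (the remaining integral is elementary, with the sign-of-$\gamma$ case split you describe). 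Your use of marginal preservation of the Markovian projection to fix the unit diagonal of $\Sigma^{f(\alpha)}$ matches what the paper does implicitly.
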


   We recall that in \Cref{fig:gaussian_example}, we empirically illustrate that
   $f(\alpha) = \alpha$, i.e. $\Pi_{0,1} = \Pbb_{0,1}^{\mathcal{M}}$ only if
   $\alpha = \alpha^\star$ for different values of $\sigma$. This confirms that,
   even in this simplified Gaussian setting, the Markovian projection only
   preserves the coupling if and only if $\Pi_{0,1}$ is the Schr\"odinger
   Bridge. 

We recall the following useful lemma.

\begin{lemma}
  \label{lemma:kl_gaussian}
  Let $\eta_i = \mathrm{N}(\mu_i, \Sigma_i)$ for $i \in \{0,1\}$, with
  $\mu_i \in \rset^d$ and $\Sigma_i$ a $d\times d$ symmetric positive
  matrices. Then, we have
  \begin{equation}
    \KLLigne{\eta_0}{\eta_1} = (-1/2)\log(\det(\Sigma_1^{-1}\Sigma_0)) + (1/2) \mathrm{Tr}(\Sigma_1^{-1}\Sigma_0) - d/2 + (1/2) \langle \mu_0 - \mu_1, \Sigma_1^{-1}(\mu_0 - \mu_1) \rangle . 
  \end{equation}
\end{lemma}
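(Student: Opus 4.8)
The plan is to compute the divergence directly from its definition using the explicit Gaussian densities, since both measures admit densities w.r.t. Lebesgue. Writing $p_i$ for the density of $\eta_i = \mathrm{N}(\mu_i, \Sigma_i)$, we have
\begin{equation}
  \textstyle \log p_i(x) = -\tfrac{d}{2}\log(2\pi) - \tfrac{1}{2}\log\det(\Sigma_i) - \tfrac{1}{2}\langle x - \mu_i, \Sigma_i^{-1}(x - \mu_i)\rangle .
\end{equation}
Then $\KLLigne{\eta_0}{\eta_1} = \mathbb{E}_{\eta_0}[\log p_0(\bfX) - \log p_1(\bfX)]$, where $\bfX \sim \mathrm{N}(\mu_0, \Sigma_0)$. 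First I would expand this log-ratio into three pieces: a deterministic log-determinant term, and the two quadratic forms $-\tfrac{1}{2}\langle \bfX - \mu_0, \Sigma_0^{-1}(\bfX - \mu_0)\rangle$ and $+\tfrac{1}{2}\langle \bfX - \mu_1, \Sigma_1^{-1}(\bfX - \mu_1)\rangle$, then take expectations termwise.

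The log-determinant term is constant and yields $\tfrac{1}{2}\log(\det\Sigma_1/\det\Sigma_0) = -\tfrac{1}{2}\log\det(\Sigma_1^{-1}\Sigma_0)$, which is the first term of the claim. For the first quadratic form, I would use the trace (``cyclic'') identity $\langle v, M v\rangle = \mathrm{Tr}(M v v^\top)$ together with $\mathbb{E}_{\eta_0}[(\bfX - \mu_0)(\bfX - \mu_0)^\top] = \Sigma_0$, giving $\mathbb{E}_{\eta_0}[\langle \bfX - \mu_0, \Sigma_0^{-1}(\bfX - \mu_0)\rangle] = \mathrm{Tr}(\Sigma_0^{-1}\Sigma_0) = d$; multiplied by $-\tfrac{1}{2}$ this produces the $-d/2$ term.

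The only step requiring a small amount of care — and thus the main (mild) obstacle — is the second quadratic form, which is centered at the ``wrong'' mean $\mu_1$. Here I would decompose $\bfX - \mu_1 = (\bfX - \mu_0) + (\mu_0 - \mu_1)$ and expand, so that
\begin{equation}
  \textstyle \langle \bfX - \mu_1, \Sigma_1^{-1}(\bfX - \mu_1)\rangle = \langle \bfX - \mu_0, \Sigma_1^{-1}(\bfX - \mu_0)\rangle + 2\langle \bfX - \mu_0, \Sigma_1^{-1}(\mu_0 - \mu_1)\rangle + \langle \mu_0 - \mu_1, \Sigma_1^{-1}(\mu_0 - \mu_1)\rangle .
\end{equation}
Taking $\mathbb{E}_{\eta_0}$, the cross term vanishes since $\mathbb{E}_{\eta_0}[\bfX - \mu_0] = 0$, the first term gives $\mathrm{Tr}(\Sigma_1^{-1}\Sigma_0)$ by the same trace identity, and the last term is already deterministic. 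Multiplying by $\tfrac{1}{2}$ contributes $\tfrac{1}{2}\mathrm{Tr}(\Sigma_1^{-1}\Sigma_0) + \tfrac{1}{2}\langle \mu_0 - \mu_1, \Sigma_1^{-1}(\mu_0 - \mu_1)\rangle$. Collecting the four contributions reproduces the stated formula exactly, which completes the proof.
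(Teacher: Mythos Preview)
Your proof is correct and is the standard derivation. The paper does not actually prove this lemma; it simply states it as a recalled fact (``We recall the following useful lemma'') and uses it downstream, so there is no paper proof to compare against.
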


We also recall the following lemma.

\begin{lemma}
  \label{lemma:gaussian_conditional}
  Let $\eta = \mathrm{N}(\mu, \Sigma)$ with $\mu = (\mu_0, \mu_1)$ and
  \begin{equation}
    \Sigma = \left(
      \begin{matrix}
        \Sigma_{0,0} & \Sigma_{0,1} \\ \Sigma_{1,0} & \Sigma_{1,1}
      \end{matrix}
    \right) ,
  \end{equation}
  with $\mu_i \in \rset^d$ and $\Sigma_{i,j} \in \rset^{d \times d}$ for
  $i, j \in \{0, 1\}$, with $\Sigma_{0,0}$ invertible. Then for any
  $x_0 \in \rset^d$, $\eta_{1|0}(\cdot|x_0)$ is a Gaussian random variable with
  mean $\mu_0 + \Sigma_{1,0}\Sigma_{0,0}^{-1}(x_0 - \mu_0)$ and covariance matrix $\Sigma_{1,1} - \Sigma_{1,0}\Sigma_{0,0}^{-1} \Sigma_{0,1}$.
\end{lemma}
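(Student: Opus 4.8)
The statement is the classical formula for the conditional law of a jointly Gaussian vector, so the plan is to give a short derivation that sidesteps inverting the full block matrix $\Sigma$. Write $(X_0, X_1) \sim \eta$, set $M = \Sigma_{1,0}\Sigma_{0,0}^{-1}$ (well-defined since $\Sigma_{0,0}$ is invertible), and introduce the \emph{residual}
\begin{equation}
  Z = X_1 - \mu_1 - M(X_0 - \mu_0) .
\end{equation}
First I would note that $(X_0, Z)$ is an affine image of the Gaussian vector $(X_0, X_1)$, hence jointly Gaussian, with $\mathbb{E}[Z] = 0$.

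Next, the key computation is that $Z$ is uncorrelated with $X_0$: using $\mathrm{Cov}(X_1, X_0) = \Sigma_{1,0}$ and $\mathrm{Cov}(X_0, X_0) = \Sigma_{0,0}$,
\begin{equation}
  \mathrm{Cov}(Z, X_0) = \Sigma_{1,0} - M\Sigma_{0,0} = \Sigma_{1,0} - \Sigma_{1,0}\Sigma_{0,0}^{-1}\Sigma_{0,0} = 0 ,
\end{equation}
so, being jointly Gaussian and uncorrelated, $Z$ and $X_0$ are independent. A similar bookkeeping step, using $\Sigma_{0,1} = \Sigma_{1,0}^\top$ and the symmetry of $\Sigma_{0,0}$, gives
\begin{equation}
  \mathrm{Cov}(Z) = \Sigma_{1,1} - M\Sigma_{0,1} - \Sigma_{1,0}M^\top + M\Sigma_{0,0}M^\top = \Sigma_{1,1} - \Sigma_{1,0}\Sigma_{0,0}^{-1}\Sigma_{0,1} .
\end{equation}
Finally I would rewrite $X_1 = \mu_1 + M(X_0 - \mu_0) + Z$; conditioning on $X_0 = x_0$ turns the first two terms into the deterministic shift $\mu_1 + \Sigma_{1,0}\Sigma_{0,0}^{-1}(x_0 - \mu_0)$ while $Z \perp X_0$ retains its law, yielding $\eta_{1|0}(\cdot|x_0) = \mathrm{N}\big(\mu_1 + \Sigma_{1,0}\Sigma_{0,0}^{-1}(x_0 - \mu_0),\ \Sigma_{1,1} - \Sigma_{1,0}\Sigma_{0,0}^{-1}\Sigma_{0,1}\big)$, which is the claimed conditional law (and in the application of interest $\mu_0 = \mu_1$, so this matches the displayed mean).

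An alternative route is to write out the joint density proportional to $\exp(-\tfrac12(x-\mu)^\top\Sigma^{-1}(x-\mu))$, apply the block-inversion formula for $\Sigma^{-1}$, and complete the square in $x_1$; this also exhibits the Schur complement $\Sigma_{1,1} - \Sigma_{1,0}\Sigma_{0,0}^{-1}\Sigma_{0,1}$ as the conditional covariance, at the cost of carrying the block inversion. There is no genuine obstacle in either approach: the content is elementary linear algebra together with the standard fact that uncorrelated jointly Gaussian vectors are independent, and the residual substitution above is precisely what keeps the matrix algebra light.
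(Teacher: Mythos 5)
Your proof is correct and complete: the residual decomposition $X_1 = \mu_1 + \Sigma_{1,0}\Sigma_{0,0}^{-1}(X_0-\mu_0) + Z$ with $Z$ uncorrelated with (hence independent of) $X_0$ is the standard argument, and your covariance bookkeeping checks out. The paper itself gives no proof --- the lemma is merely ``recalled'' as a classical fact --- so there is nothing to compare against beyond noting that your route is the usual one. You were also right to flag that the mean in the statement should read $\mu_1 + \Sigma_{1,0}\Sigma_{0,0}^{-1}(x_0-\mu_0)$ rather than $\mu_0 + \cdots$; this is a typo in the paper, harmless in its application since there $\mu_0=\mu_1=0$.
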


We divide the proof into two parts.
\begin{enumerate}[label=(\alph*), wide]
\item First, we derive the static Schr\"odinger Bridge. Recall that
  $\Pi_0 = \Pi_1 = \mathrm{N}(0, 1)$ and that $\Qbb$ is associated with
  $(\sigma \bfB_t)_{t \in \ccint{0,1}}$ and $\Qbb_0 = \mathrm{N}(0, 1)$. We
  recall the static problem
    \begin{equation}
      \label{eq:sb_static_appendix}
    \Pbb^\star_{0,1} = \argmin \ensembleLigne{\KLLigne{\Pbb_{0,1}}{\Qbb_{0,1}}}{\Pbb \in \mathcal{P}(\rset^d\times\rset^d), \ \Pbb_0=\Pi_0, \ \Pbb_1 = \Pi_1} . 
  \end{equation}
  Using \cite[Appendix G.1]{debortoli2021diffusion} for instance, we have
  that $\Pbb^\star$ is Gaussian with zero mean. Hence, there exists
  $\alpha^\star \in \rset$ with $\alpha^\star \in \ooint{0,1}$ such that
  \begin{equation}
    \Pbb^\star_{0,1} = \mathrm{N}(\mu^\star, \Sigma^\star) , \qquad \mu^\star = 
    \left(\begin{matrix}
      0 \\ 0 
          \end{matrix}\right), \quad \Sigma^\star = \left(
        \begin{matrix}
          1 & \alpha^\star \\
          \alpha^\star & 1
        \end{matrix}\right)
       . 
    \end{equation}
    Denote $\Pbb_{0,1}^\alpha$ such that for any $\alpha \in \ooint{0,1}$
  \begin{equation}
    \Pbb^\alpha_{0,1} = \mathrm{N}(\mu^\alpha, \Sigma^\alpha), \qquad \mu^\alpha = 
    \left(\begin{matrix}
      0 \\ 0 
          \end{matrix}\right), \quad \Sigma^\alpha = \left(
        \begin{matrix}
          1 & \alpha \\
          \alpha & 1
        \end{matrix}\right)
       . 
    \end{equation}
    We also have that
  \begin{equation}
    \Qbb_{0,1} = \mathrm{N}(\mu, \Sigma) , \qquad \mu = 
    \left(\begin{matrix}
      0 \\ 0 
          \end{matrix}\right), \quad \Sigma = \left(
        \begin{matrix}
          1 & 1 \\
          1 & 1+\sigma^2
        \end{matrix}\right)
       . 
     \end{equation}
     In particular, we have that
     \begin{equation}
\Sigma^{-1} \Sigma^\alpha = (1/\sigma^2) \left(
  \begin{matrix}
    1 + \sigma^2 - \alpha & \alpha(1+\sigma^2) - 1 \\
    -1 + \alpha & 1 - \alpha 
  \end{matrix}
\right) .        
\end{equation}
Using \Cref{lemma:kl_gaussian}, we get that
\begin{equation}
  \mathrm{Tr}(\Sigma^{-1} \Sigma^\alpha) = (1/\sigma^2)(2-2\alpha + \sigma^2) , \qquad \det(\Sigma^{-1} \Sigma^\alpha) = (1-\alpha^2)/\sigma^2 . 
\end{equation}
We have that for any $\alpha \in \ooint{0,1}$,
\begin{equation}
  \KLLigne{\Pbb_{0,1}^\alpha}{\Qbb_{0,1}} = -(1/2)\log(1-\alpha^2) - \alpha/\sigma^2 + C ,
\end{equation}
with $C$ which does not depend on $\alpha$.
Hence, at optimality, we have $1 - (\alpha^\star)^2 = \sigma^2 \alpha^\star$.
Therefore, we get that
\begin{equation}
      \Pbb^\star_{0,1} = \mathrm{N}(\mu^\star, \Sigma^\star) , \qquad \mu^\star = 
    \left(\begin{matrix}
      0 \\ 0 
          \end{matrix}\right), \quad \Sigma^\star = \left(
        \begin{matrix}
          1 & \alpha^\star \\
          \alpha^\star & 1
        \end{matrix}\right)
       , \qquad \alpha^\star = (\sigma^2/2)((1+4/\sigma^4)^{1/2} - 1) \in \ooint{0,1} . 
     \end{equation}
     In particular, note that $\alpha^\star \to 1$ if $\sigma \to 0$ and
     $\alpha^\star \to 0$ if $\sigma \to +\infty$.
   \item Next, we derive $\Pbb^{\mathcal{M}}_{0,1}$. We recall that
     $\Pbb^{\mathcal{M}}_{0,1}$ is associated with
     $(\bfX_t)_{t \in \ccint{0,1}}$ such that
     $\rmd \bfX_t = (\int_{\rset} x_1 \rmd \Pbb_{1|t}(x_1|\bfX_t) - \bfX_t) / (1-t)
     \rmd t + \sigma \rmd \bfB_t$. Therefore, we first compute the mean of
     $\Pbb_{1|t}$. For any $t \in \ccint{0,1}$ we have
     \begin{equation}
       \bfX_t = t \bfX_1 + (1-t) \bfX_0 + \sigma (t(1-t))^{1/2} \bfZ , 
     \end{equation}
     with $\bfZ \sim \mathrm{N}(0,1)$ independent from $(\bfX_0, \bfX_1)$. In
     particular, we have
     \begin{align}
       \label{eq:covariance_structure}
       &\mathrm{Cov}_\Pbb(\bfX_t, \bfX_1) = t + (1-t) \alpha , \\
       &\mathrm{Var}_\Pbb(\bfX_t) = t^2 + (1-t)^2 + \sigma^2 t (1-t) + 2t(1-t)\alpha = 1 + t(1-t)(\sigma^2 + 2\alpha - 2) . 
     \end{align}
     In what follows, we denote
     $\mu_{1|t}(x_t) = \int_{\rset} x_1 \rmd \Pbb_{1|t}(x_1|x_t)$. Using \Cref{lemma:gaussian_conditional} and 
     \eqref{eq:covariance_structure}, we have for any $t \in \ooint{0,1}$ and $x_t \in \rset$
     \begin{equation}
       \mu_{1|t}(x_t) = (t + (1-t) \alpha)/(1 + t(1-t)(\sigma^2 + 2\alpha - 2)) x_t . 
     \end{equation}
     In addition, we have for any $t \in \ooint{0,1}$ and $x_t \in \rset$
     \begin{align}       
       \mu_{1|t}(x_t) - x_t &= (t + (1-t)\alpha - 1 -t(1-t) (\sigma^2 +2 \alpha -2))/(1 +t(1-t) (\sigma^2 + 2\alpha -2))x_t \\
       &= (1-t)(-1+\alpha  - t (\sigma^2 + 2\alpha -2))/(1 +t(1-t) (\sigma^2 + 2\alpha -2)) x_t . 
     \end{align}
     Hence, we get that for any $t \in \ooint{0,1}$ and $x_t \in \rset$
     \begin{equation}
       (\mu_{1|t}(x_t) - x_t)/(1-t) = (-1+\alpha  - t (\sigma^2 +2 \alpha -2))/(1 +t(1-t) (\sigma^2 + 2\alpha -2)) x_t . 
     \end{equation}
     In what follows, we denote for any $t \in \ooint{0,1}$
     \begin{equation}
       \textstyle \kappa(t) = (-1+\alpha  - t (\sigma^2 +2 \alpha -2))/(1 +t(1-t) (\sigma^2 + 2\alpha -2)) , \qquad K(t) = \int_0^t \kappa(s) \rmd s .
     \end{equation}
     Using \cite[Section 3]{barczy2013representations}, we get that
     \begin{equation}
       \textstyle \mathrm{Cov}_{\Pbb^{\mathcal{M}}}(\bfX_1, \bfX_0) = 1 + \sigma^2 \exp[2 K(1)] \int_0^1  \exp[-2 K(s)] \rmd s ,
     \end{equation}
     which concludes the proof.
\end{enumerate}
\section{Entropic Coupling}
\label{sec:entropic_coupling}

We demonstrate that our algorithm effectively preserves the coupling when the coupling strength is not excessively entropic. Figure \ref{fig:toy} illustrates the performance of our algorithm on Gaussian Mixture datasets, where $\rmX_0$ represents the desired dataset. We introduce a controlled level of noise with a standard deviation of $k$ to the target distribution, which serves as the source for sampling $\rmX_1\sim\mathrm{N}(\rmX_0,k^2)$. The augmented algorithm succeeds in maintaining the coupling when the coupling is within a manageable entropic range. However, as the pairing becomes increasingly entropic, our algorithm encounters challenges and fails to accurately recover the data distribution.
   \begin{figure}[H]
   \vskip -0.05in
     \centering
     \includegraphics[width=\linewidth]{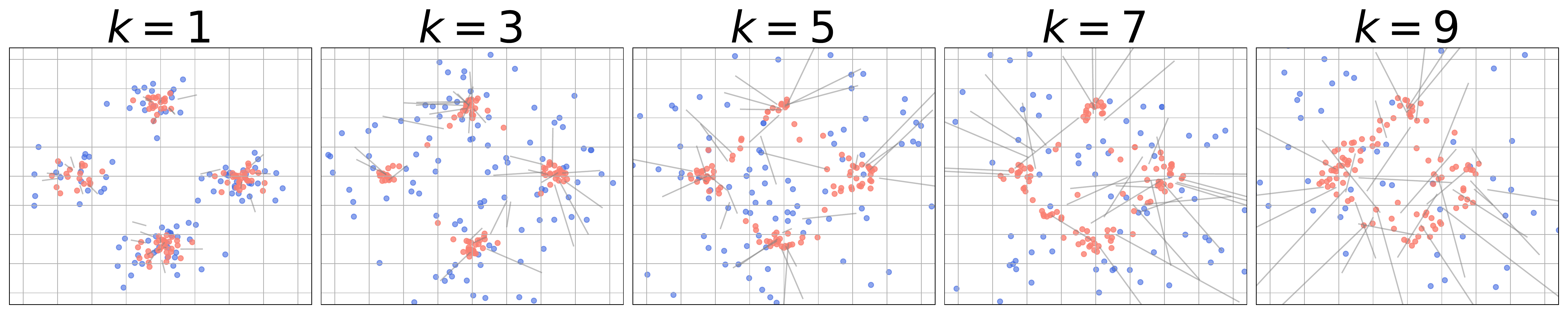}
     \caption{
        The red particles represent samples drawn from the target distribution, while the blue particles correspond to the noisy data originating from the target distribution. This noise is introduced by corrupting the target distribution with Gaussian noise, characterized by a standard deviation of $k$. The gray lines denote the pairing induced by model.
    }
     \label{fig:toy}
   \end{figure}

\section{Additional visual results}
\label{sec:more-images}

   \begin{figure}[h]
     \centering
     \includegraphics[width=\linewidth]{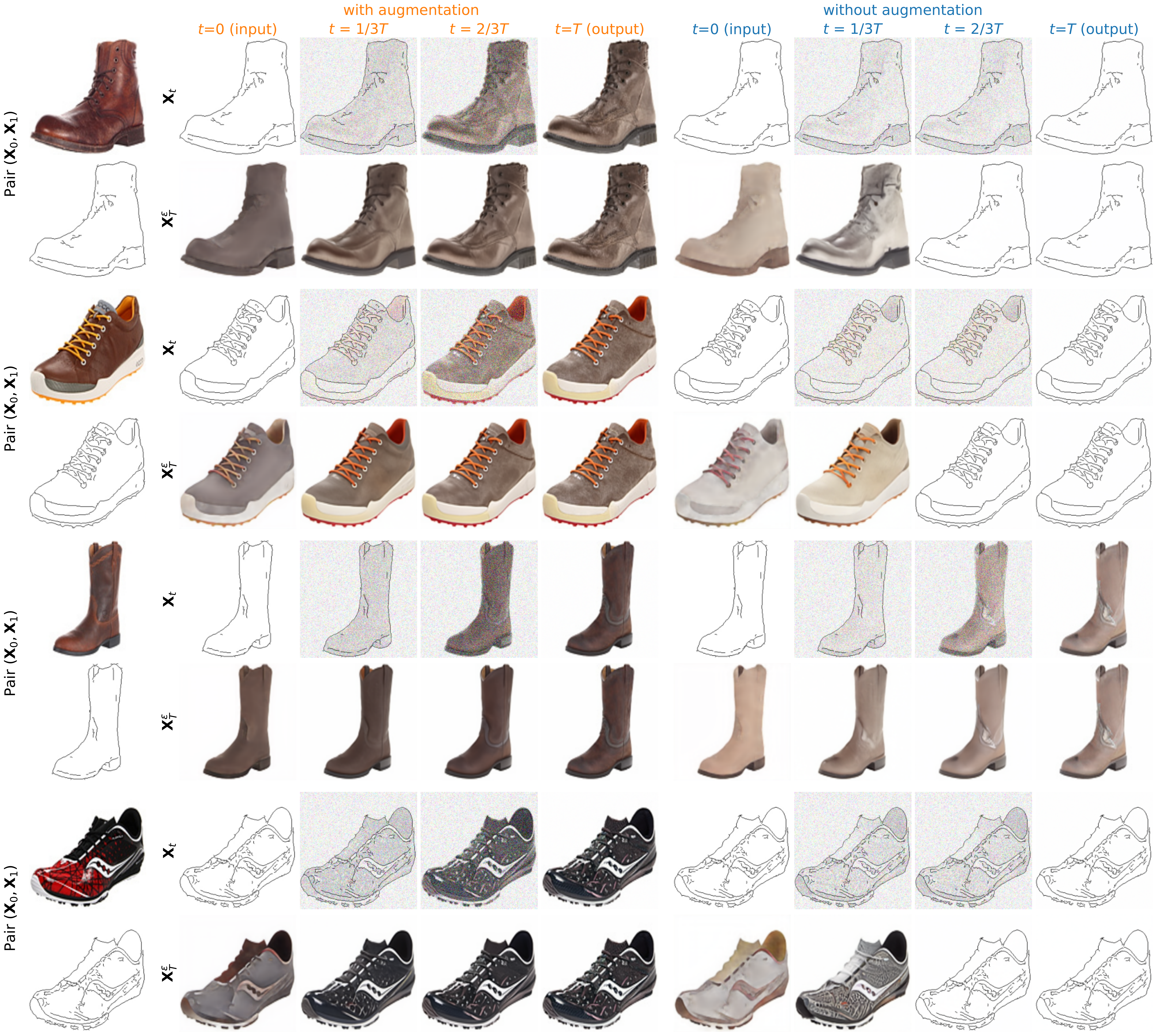}
     \caption{
        Additional qualitative results between our augmented bridge matching (2nd-5th columns) and standard bridge matching (6th-9th columns) on the bidirectional translation of \texttt{edges2shoes} task.
    }
     \label{fig:pix2pix-e2s}
   \end{figure}

   \begin{figure}[h]
     \centering
     \includegraphics[width=\linewidth]{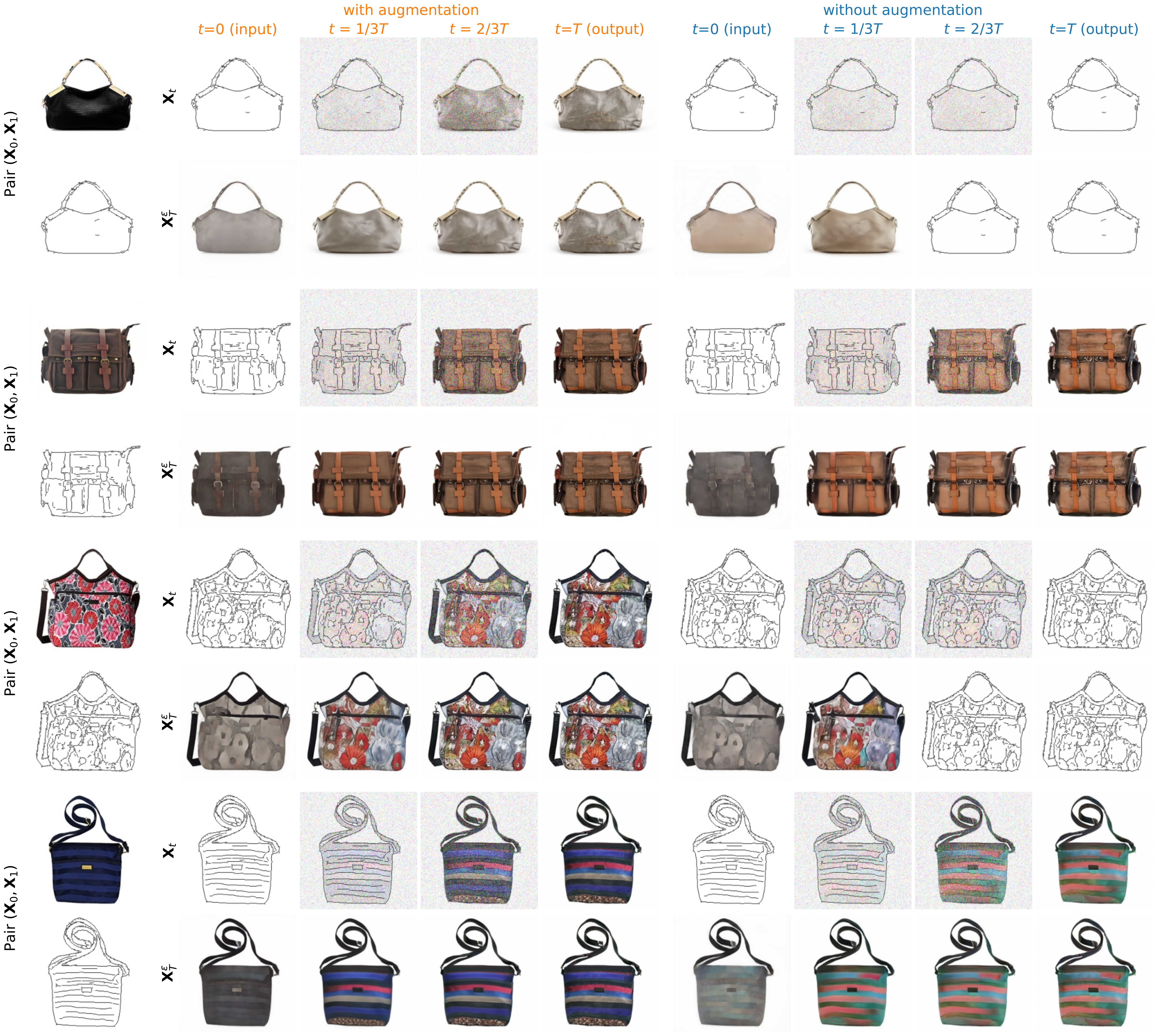}
     \caption{
        Additional qualitative results between our augmented bridge matching (2nd-5th columns) and standard bridge matching (6th-9th columns) on the bidirectional translation of \texttt{edges2handbags} task.
    }
     \label{fig:pix2pix-e2h}
   \end{figure}

   \begin{figure}[h]
     \centering
     \includegraphics[width=\linewidth]{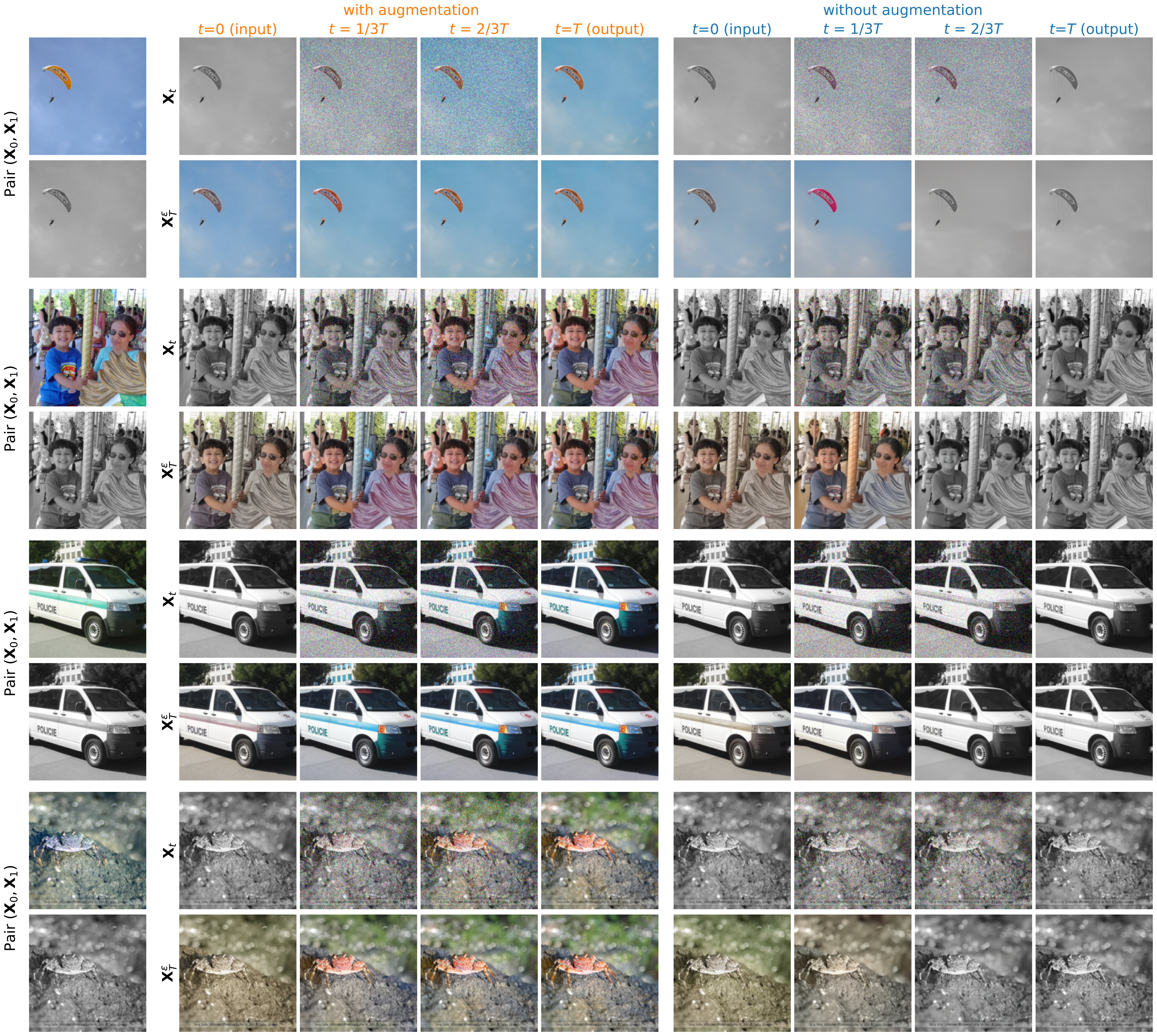}
     \caption{
        Additional qualitative results between our augmented bridge matching (2nd-5th columns) and standard bridge matching (6th-9th columns) on the bidirectional translation of (de-)colorization task.
    }
     \label{fig:pix2pix-c}
   \end{figure}

\end{document}